\newtheorem{theorem}{Theorem}
\newtheorem{proposition}{Proposition}
\newtheorem{lemma}{Lemma}
\newtheorem{definition}{Definition}
\newtheorem{proof}{Proof}
\algnewcommand{\LineComment}[1]{\State \(\triangleright\) #1}
\newcites{appx}{References}
\newif\ifunderreview
\newif\ifsubmission
\newif\ifappendix
\newcommand{\todo}[1]{}
\newcommand{\replace}[2]{}
\newcommand{\sw}[1]{}
\newcommand{\fh}[1]{}
\newcommand{\todo}[1]{\textbf{\textcolor{red}{[TODO: #1]}}}
\newcommand{\replace}[2]{\textbf{\textcolor{red}{[del: \cancel{#1}]}}\textbf{\textcolor{blue}{[new: #2]}}}
\newcommand{\sw}[1]{\textbf{\textcolor{cyan}{[SW: #1]}}}
\newcommand{\fh}[1]{\textbf{\textcolor{green}{[FH: #1]}}}
\newcommand{\customlabel}[2]{%
   \protected@write \@auxout {}{\string \newlabel {#1}{{#2}{\thepage}{#2}{#1}{}} }%
   \hypertarget{#1}{}
}
\newcommand{\argmin}{\mathop{\mathrm{argmin}}}
\newcommand{\argmax}{\mathop{\mathrm{argmax}}}
\newcommand{\alphav}{\boldsymbol{\alpha}}
\newcommand{\Rgeq}{\mathbb{R}_{\geq 0}}
\newcommand{\fv}{\boldsymbol{f}}
\newcommand{\xv}{\boldsymbol{x}}
\newcommand{\xopt}{\xv_{\text{opt}}}
\newcommand{\prob}{\mathbb{P}}
\newcommand{\B}{\mathcal{B}}
\newcommand{\D}{\mathcal{D}}
\newcommand{\X}{\mathcal{X}}
\newcommand{\Y}{\mathcal{Y}}
\newcommand{\Dv}{\boldsymbol{D}}
\newcommand{\Dl}{\D^{(l)}}
\newcommand{\Dg}{\D^{(g)}}
\newcommand{\Dvl}{\Dv^{(l)}}
\newcommand{\Dvg}{\Dv^{(g)}}
\newcommand{\EI}{\mathrm{EI}}
\newcommand{\Bx}{\B_{\X}}
\newcommand{\Xg}{\X^{\gamma}}
\newcommand{\dtv}{d_{\mathrm{tv}}}
\newcommand{\indic}[1]{\mathbbm{I}[#1]}
\newcommand{\set}[1]{\mathrm{set}(#1)}
\newcommand{\rank}{\stackrel{\mathrm{rank}}{\simeq}}
\newcommand{\citewithname}[2]{#1 \textit{et al.} \shortcite{#2}}
\begin{document}
\title{
  Speeding Up Multi-Objective Hyperparameter Optimization by \\ Task Similarity-Based Meta-Learning for the Tree-Structured Parzen Estimator
}

\author{
  Shuhei Watanabe \thanks{The work was partially done in AIST.} \textsuperscript{\rm 1}
  \and
  Noor Awad \textsuperscript{\rm 1}
  \and
  Masaki Onishi \textsuperscript{\rm 2}
  \And
  Frank Hutter\textsuperscript{\rm 1} \\
  \affiliations
  \textsuperscript{\rm 1} Department of Computer Science, University of Freiburg, Germany \\
  \textsuperscript{\rm 2} Artificial Intelligence Research Center, AIST, Tokyo, Japan \\
  \emails
  \{watanabs,awad,fh\}@cs.uni-freiburg.de, onishi-masaki@aist.go.jp \\
}

\maketitle

\begin{abstract}
  Hyperparameter optimization (HPO) is a vital step
  in improving performance in deep learning (DL).
  Practitioners are often faced with
  the trade-off between multiple criteria,
  such as accuracy and latency.
  Given the high computational needs of DL and
  the growing demand for efficient HPO,
  the acceleration of multi-objective (MO) optimization
  becomes ever more important.  
  Despite the significant body of work on meta-learning for HPO,
  existing methods are inapplicable to MO tree-structured Parzen estimator (MO-TPE), a simple yet powerful MO-HPO algorithm.
  In this paper, we extend TPE's acquisition function 
  to the meta-learning setting
  using a task similarity defined by the overlap of
  top domains between tasks.
  We also theoretically analyze and address the limitations of our task similarity.
  In the experiments,
  we demonstrate that our method speeds up MO-TPE on
  tabular HPO benchmarks and
  attains state-of-the-art performance.
  Our method was also validated externally by
  winning the \emph{AutoML 2022 competition on ``Multiobjective Hyperparameter Optimization for Transformers''}.
\end{abstract}

\section{Introduction}
\label{main:section:introduction}
Hyperparameter optimization (HPO) is a critical step in achieving
strong performance in deep learning~\cite{chen2018bayesian,henderson-aaai18a}.
Additionally, practitioners are often faced with the trade-off between
important metrics, such as accuracy, latency of inference, memory usage, and
algorithmic fairness~\cite{schmucker2020multi,candelieri2022fair}.
However, exploring the Pareto front of multiple objectives is more complex than single-objective optimization,
making it particularly important to accelerate multi-objective (MO) optimization.

\begin{figure}
  \centering
  \includegraphics[width=0.48\textwidth]{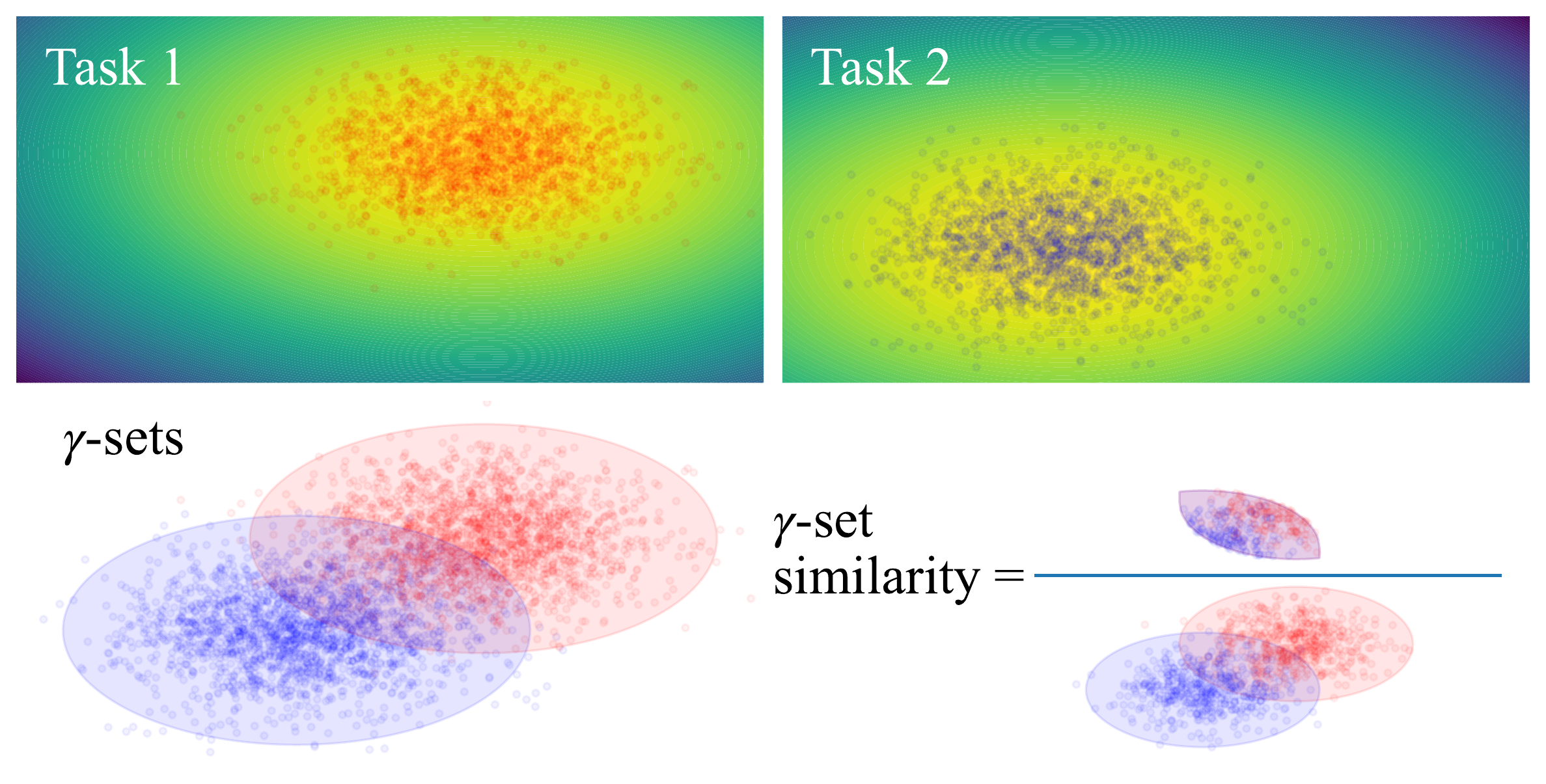}
  \vspace{-6mm}
  \caption{
    The conceptual visualization of
    \emph{$\gamma$-set similarity} measure.
    \textbf{Top row}: the $\gamma$-sets of each task.
    The dots show the top-$\gamma$-quantile observations in both tasks.
    \textbf{Bottom row}: the $\gamma$-set similarity is measured
    via intersection over union
    of the top-$\gamma$-quantile domain,
    which we define \emph{$\gamma$-set};
    see Definition~\ref{main:background:def:gamma-set} in Appendix~\ref{appx:preliminaries:section} for more details.
  }
  \vspace{-4mm}
  \label{main:background:fig:similarity-conceptual}
\end{figure}

To accelerate HPO,
a large body of work on meta-learning has been actively conducted, as surveyed, e.g., by Vanschoren~\shortcite{vanschoren2019meta}.
In the context of HPO, meta-learning mainly focuses on
the knowledge transfer of metadata in Bayesian optimization
(BO)~\cite{swersky2013multi,wistuba2016two,feurer2018practical,perrone2018scalable,salinas2020quantile,Volpp2020Meta-Learning}.
These methods use meta information in Gaussian process (GP) regression to yield
more informed surrogates or an improved acquisition function (AF)
for the target dataset, making them applicable to existing MO-BO methods,
such as ParEGO~\cite{knowles2006parego} and
SMS-EGO~\cite{ponweiser2008multiobjective}.
However, recent works reported that a variant of BO called
MO tree-structured Parzen estimator (MO-TPE)~\cite{ozaki2020multiobjective,ozaki2022multiobjective}
is more effective than the aforementioned GP-based methods in expensive MO settings.
Since MO-TPE uses kernel density estimators (KDEs) instead of GPs,
existing meta-learning methods are not directly applicable,
and a meta-learning procedure for TPE is yet to be explored.

To address this issue,
we propose a meta-learning method for TPE on non-hierarchical spaces,
i.e. search space does not include any conditional parameters,
using a new task kernel.
Our method models the joint probability density function (PDF) of
an HP configuration $\xv$ and a task $t$
using a new task kernel $k_t(t_i, t_j)$.
We calculate the task kernel by using
the intersection-over-union-based new similarity measure,
which we call \emph{$\gamma$-set similarity},
as visualized in
Figure~\ref{main:background:fig:similarity-conceptual}.
Note that we describe the theoretical details in
Appendix~\ref{appx:details-of-meta-learning-tpe:section}.
Although this task kernel
successfully works well in many cases,
its performance is degraded under some circumstances, such as
for high-dimensional spaces or when transferring knowledge from slightly dissimilar tasks.
To alleviate this performance degradation,
we analytically discuss and address the issues in this task kernel
by (1) dimension reduction based on HP importance (HPI)
and (2) an $\epsilon$-greedy algorithm to determine the next HP configuration.

In our experiments,
we demonstrate that our method successfully speeds up MO-TPE
(or at least recovers the performance of MO-TPE when meta-tasks are not similar).
The effectiveness of our method was also validated externally by winning the
\emph{AutoML 2022 competition on ``Multiobjective Hyperparameter Optimization for Transformers''}.
Note that this paper serves as the public announcement of the winner solution as well.

In summary, the main contributions of this paper are to:
\begin{enumerate}
  \item extend TPE acquisition function (AF) to the meta-learning setting
        using a new task kernel,
        \vspace{-1mm}
  \item discuss the drawbacks of the task kernel
        and provide the solutions to them, and
        \vspace{-1mm}
  \item validate the performance of our method on real-world tabular benchmarks next to the external competition.
        \vspace{-2mm}
\end{enumerate}
To facilitate reproducibility, our source code is available at \url{https://github.com/nabenabe0928/meta-learn-tpe}.


\section{Related Work}
\label{main:related-work:section}
In the context of BO,
MO optimization is handled either by
reducing MO to a single-objective problem (scalarization)
or employing an AF that measures
utility of a new configuration in the objective space.
ParEGO~\cite{knowles2006parego}
is an example of scalarization that enjoys
a convergence guarantee to the Pareto front.
SMS-EGO~\cite{ponweiser2008multiobjective} uses a lower-confidence bound
of each objective to calculate hypervolume (HV) improvement,
and EHVI~\cite{emmerich2011hypervolume}
uses expected HV improvement.
PESMO~\cite{hernandez2016predictive} and MESMO~\cite{wang2017max}
are the extensions for MO settings of predictive entropy search and
max-value entropy search.
While those methods rely on GP, MO-TPE
uses KDE and was shown to outperform the aforementioned methods in expensive MO-HPO
settings~\cite{ozaki2020multiobjective,ozaki2022multiobjective}.

The evolutionary algorithm (EA) community also studies MO actively. MOEAs
use either surrogate-assisted EAs (SAEA)~\cite{chugh2016surrogate,guo2018heterogeneous,pan2018classification}
or non-SAEA methods.
Non-SAEA methods, such as
NSGA-II~\cite{deb2002fast}
and MOEA/D~\cite{zhang2007moea},
typically require thousands of evaluations to converge~\cite{ozaki2020multiobjective},
and thus SAEAs are currently more dominant in the EA domain.
Since SAEAs combine an EA with a cheap-to-evaluate surrogate,
SAEAs are essentially similar to BO, as they can be seen as using EAs to optimize a particular AF in BO.

Meta-learning~\cite{vanschoren2019meta} is a popular method to accelerate optimization and most of them can be classified into either of the following five types in the context of HPO:
\begin{enumerate}
  \vspace{-1mm}
  \item initialization (or warm-starting) using promising configurations in meta-tasks~\cite{feurer2015initializing,nomura2021warm},
  \vspace{-1mm}
  \item search space reduction~\cite{wistuba2015hyperparameter,perrone2019learning},
  \vspace{-1mm}
  \item learning an AF~\cite{Volpp2020Meta-Learning},
  \vspace{-1mm}
  \item linear combination of models trained on each task~\cite{wistuba2016two,feurer2018practical}, and
  \vspace{-1mm}
  \item training of a model jointly with meta-tasks~\cite{swersky2013multi,springenberg2016bayesian,perrone2018scalable,salinas2020quantile}.
  \vspace{-1mm}
\end{enumerate}
Warm-starting helps especially at the early stage of optimizations but does not use knowledge from the metadata afterward.
Search space reduction could be applied to any method, but cannot identify the best configurations if the target task's optimum is outside of the optima for the meta-train tasks.
The learning of AFs applies an expensive reinforcement learning step and is specific to GP-based methods.
The linear combination is empirically demonstrated to outperform
most meta-learning BO methods~\cite{feurer2018practical} including the search space reduction.
The joint model trains a model on both observations and metadata.
Although the linear combination of models (Type 4) is simple yet empirically strong, no meta-learning scheme for TPE has been developed so far.
For this reason, we introduce a meta-learning method via a joint model (Type 5) inspired by Type 4.


\section{Background}
\label{main:background:section}

\subsection{Bayesian Optimization (BO)}
Suppose we would like to \textbf{minimize} a loss
metric $f(\xv)$,
then HPO can be formalized as follows:
\begin{equation}
\begin{aligned}
  \xopt \in \argmin_{\xv \in \X} f(\xv)
\end{aligned}
\end{equation}
where $\X \coloneqq \X_1 \times \dots \X_D \subseteq \mathbb{R}^D$
is the search space and
$\X_d \subseteq \mathbb{R}$ ($d = 1,\dots,D$) is the domain of the $d$-th HP.
In Bayesian optimization (BO)~\cite{brochu-arXiv10a,shahriari-ieee16a,garnett-bayesian22},
we assume that $f(\xv)$
is expensive, and we consider the optimization
in a surrogate space given observations $\D$.
First, we build a predictive model $p(f|\xv, \D)$.
Then, the optimization in each iteration is replaced
with the optimization of the so-called AF.
A common choice for the AF is
the following expected improvement~\cite{jones1998efficient}:
\begin{equation}
  \begin{aligned}
    \mathrm{EI}_{f^\star}[\xv | \D] = 
    \int_{-\infty}^{f^\star}
    (f^\star - f)p(f | \xv, \D) df.
  \end{aligned}
  \label{expected-improvement-eq}
\end{equation}
Another common choice is the following probability of improvement (PI)~\cite{kushner1964new}:
\begin{equation}
  \begin{aligned}
    \prob[f \leq f^\star | \xv, \D] =
    \int_{-\infty}^{f^\star} p(f | \xv, \D) df.
  \end{aligned}
\end{equation}

\subsection{Tree-Structured Parzen Estimator (TPE)}
TPE~\cite{bergstra2011algorithms,bergstra2013making}
is a variant of BO methods and it uses the expected improvement.
See Watanabe~\shortcite{watanabe2023tpe} to better understand the algorithm components.
To transform Eq.~(\ref{expected-improvement-eq}),
we define the following:
\begin{eqnarray}
  p(\xv | f, \D) \coloneqq \left\{
  \begin{array}{ll}
     p(\xv | \Dl) & (f \leq f^\gamma) \\
     p(\xv | \Dg) & (f > f^\gamma)
  \end{array}
  \right.
  \label{eq:l-and-g-trick}
\end{eqnarray}
where $\Dl, \Dg$ are the observations 
with $f(\xv_n) \leq f^\gamma$ and $f(\xv_n) > f^\gamma$, respectively.
$f^\gamma$ is determined such that $f^\gamma$ is
the $\lceil \gamma |\D| \rceil$-th best observation in $\D$.
Note that $p(\xv | \Dl), p(\xv | \Dg)$
are built by KDE.
Combining Eqs.~(\ref{expected-improvement-eq}), (\ref{eq:l-and-g-trick})
and Bayes' theorem,
the AF of TPE is computed as~\cite{bergstra2011algorithms}:
\begin{equation}
\begin{aligned}
  \mathrm{EI}_{f^\gamma}[\xv | \D]
  &\rank \frac{p(\xv | \Dl)}{p(\xv | \Dg)}.
\end{aligned}
\label{tpe-transformation}
\end{equation}
Note that $\phi(\xv) \rank \psi(\xv)$ implies
the order isomorphic and 
$\forall \xv, \xv^\prime \in \X, \phi(\xv) \leq \phi(\xv^\prime) \Leftrightarrow \psi(\xv) \leq \psi(\xv^\prime)$ holds.
In each iteration, TPE samples configurations from
$p(\xv | \Dl)$ and takes the configuration
that satisfies the maximum density ratio
among the samples.
Note that although our task kernel cannot be computed for tree-structured search space,
a.k.a. non-hierarchical search space, we use the name tree-structured Parzen estimator because this name is already recognized as a BO method using the density ratio of KDEs.

\subsection{Multi-Objective TPE (MO-TPE)}
MO-TPE~\cite{ozaki2020multiobjective,ozaki2022multiobjective}
is a generalization of TPE
with MO settings which falls back to the original TPE
in case of single-objective settings.
MO-TPE also uses the density ratio $p(\xv | \Dl) / p(\xv|\Dg)$
and picks the configuration with the best AF value
at each iteration.
The only difference from the original TPE is the split algorithm
of $\D$ into $\Dl$ and $\Dg$.
MO-TPE uses the HV subset selection
problem (HSSP)~\cite{bader2011hype} to obtain $\Dl$.
HSSP tie-breaks configurations with the same non-domination rank based on the HV contribution.
MO-TPE is reduced to the original TPE when we apply it to a single objective problem.
In this paper, we replace HSSP with a simple tie-breaking method
based on the crowding distance~\cite{deb2002fast}
as this method does not require HV calculation,
which can be highly expensive.

\section{Meta-Learning for TPE}
\label{main:meta-learning-tpe:section}
In this section, we briefly explain
the TPE formulation and then describe
the formulation of the AF for the meta-learning setting.
Note that our method can be easily extended
to MO settings using a rank metric
$R: \mathbb{R}^m \rightarrow \mathbb{R}$ of
an objective vector $\fv \in \mathbb{R}^m$,
and thus we discuss our formulation for the single-objective setting
for simplicity;
see Appendix~\ref{appx:generalization-tpe:section} for the theoretical discussion of the extension to MO settings.

Throughout this paper, we denote metadata
as $\Dv \coloneqq \{\D_m\}_{m=1}^T$,
where $T \in \mathbb{N}$ is the number of tasks and $\D_m$ is the set of observations on the $m$-th task
with size $N_m \coloneqq |\D_m|$.
We use the notion of the $\gamma$-set,
which is, roughly speaking, a set of top-$\gamma$ quantile configurations
as visualized in Figure~\ref{main:background:fig:similarity-conceptual};
for more theoretical details, see Appendix~\ref{appx:preliminaries:section}.
Furthermore, we define $\Xg_m$ as the $\gamma$-set of the $m$-th task.
For example, the red regions and the blue regions in Figure~\ref{main:background:fig:similarity-conceptual} correspond to $\Xg_1$ and $\Xg_2$.

\subsection{Task-Conditioned Acquisition Function}
TPE~\cite{bergstra2011algorithms} first splits a set of observations
$\D = \{(\xv_n, f(\xv_n))\}_{n=1}^N$ into $\Dl$ and $\Dg$
at the top-$\gamma$ quantile.
Then we build KDEs $p(\xv|\Dl)$ and $p(\xv|\Dg)$, and compute the AF via $p(\xv|\Dl)/p(\xv|\Dg)$.
The following proposition provides the multi-task version of the AF:
\begin{proposition}
  Under the assumption of the conditional shift,
  the task-conditioned $\mathrm{AF}$ is computed as$\mathrm{:}$
  \begin{equation}
    \begin{aligned}
      \mathrm{EI}_{f^\gamma}[\xv | t, \Dv]
      \rank \frac{p(\xv, t | \Dvl)}{p(\xv, t | \Dvg)}.
    \end{aligned}
    \label{main:methods:eq:task-conditioned-acq-fn}
  \end{equation}
\end{proposition}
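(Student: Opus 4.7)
The plan is to mimic the standard TPE derivation that produced Eq.~(\ref{tpe-transformation}), working now with the joint density over the HP configuration $\xv$ and the task variable $t$ rather than $\xv$ alone. The conditional shift assumption plays the multi-task analogue of the splitting identity in Eq.~(\ref{eq:l-and-g-trick}): it should state that $p(\xv, t | f, \Dv) = p(\xv, t | \Dvl)$ when $f \leq f^\gamma$ and $p(\xv, t | f, \Dv) = p(\xv, t | \Dvg)$ otherwise. This is the only task-specific ingredient; everything else will be formally identical to the single-task case.

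Starting from $\mathrm{EI}_{f^\gamma}[\xv | t, \Dv] = \int_{-\infty}^{f^\gamma}(f^\gamma - f) p(f | \xv, t, \Dv) df$, I would first apply Bayes' theorem in the form $p(f | \xv, t, \Dv) = p(\xv, t | f, \Dv) p(f | \Dv) / p(\xv, t | \Dv)$, and then invoke the conditional shift assumption to replace $p(\xv, t | f, \Dv)$ by $p(\xv, t | \Dvl)$ throughout the integration region $f \leq f^\gamma$. This pulls $p(\xv, t | \Dvl)$ outside the integral, leaving behind a multiplicative constant $Z \coloneqq \int_{-\infty}^{f^\gamma}(f^\gamma - f) p(f | \Dv) df$ that depends on neither $\xv$ nor $t$.

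Next I would compute the marginal $p(\xv, t | \Dv)$ by integrating $f$ out: splitting the integral at $f^\gamma$ and applying the conditional shift identity on each side yields $p(\xv, t | \Dv) = \gamma \, p(\xv, t | \Dvl) + (1-\gamma) \, p(\xv, t | \Dvg)$, where $\gamma = \prob[f \leq f^\gamma | \Dv]$ by the very construction of $f^\gamma$. Substituting back into the expression for the EI gives
\[
  \mathrm{EI}_{f^\gamma}[\xv | t, \Dv] = \frac{Z}{\gamma + (1-\gamma) \, p(\xv, t | \Dvg) / p(\xv, t | \Dvl)},
\]
which, since $Z, \gamma$ are positive constants independent of $(\xv, t)$, is a strictly increasing function of the density ratio $p(\xv, t | \Dvl)/p(\xv, t | \Dvg)$; this yields the claimed rank equivalence $\rank$.

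The hard part is not the algebra, which is a clean generalization of the single-task case, but making the conditional shift assumption precise enough to justify the joint splitting identity $p(\xv, t | f, \Dv) = p(\xv, t | \Dvl)$ on $\{f \leq f^\gamma\}$ (and its counterpart on the complement). In particular, the step-function-in-$f$ approximation implicit in Eq.~(\ref{eq:l-and-g-trick}) must hold jointly over $(\xv, t)$, which is what ties the target-task density to the metadata and motivates the $\gamma$-set similarity machinery developed in the rest of the paper.
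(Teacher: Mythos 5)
Your proof follows essentially the same route as the paper's: Bayes' theorem on $p(f|\xv,t,\Dv)$, the joint splitting identity for $p(\xv,t|f,\Dv)$, marginalizing $f$ to get $p(\xv,t|\Dv)=\gamma\, p(\xv,t|\Dvl)+(1-\gamma)\, p(\xv,t|\Dvg)$, and monotonicity of the resulting expression in the density ratio. The one point you explicitly defer---how the conditional shift assumption $p(f|t_i)=p(f|t_j)$ licenses a single split point valid jointly over $(\xv,t)$---is handled in the paper by replacing $f$ with its quantile function, so that every task's target PDF becomes uniform on $[0,1]$ and the split occurs at the common threshold $f=\gamma$ for all tasks.
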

The conditional shift means that $p(\xv | y, t_i) = p(\xv | y, t_j)$ holds for different tasks, i.e. $t_i \neq t_j$ and it holds in our formulation due to the classification nature of the TPE model.
We discuss more details in Appendix~\ref{appx:details-of-acq-fn:section}.
This formulation transfers the knowledge of top domains
and weights the knowledge from similar tasks more.
To compute the AF, we need to model the joint PDFs
$p(\xv, t|\Dl), p(\xv, t|\Dg)$,
which we thus discuss in the next section.

\subsection{Task Kernel}
To compute the task kernel $k_t(t_i, t_j)$,
the $\gamma$-set similarity visualized in Figure~\ref{main:background:fig:similarity-conceptual} (see Appendix~\ref{appx:preliminaries:section} for the formal definition) is employed.
From Theorem~\ref{main:methods:theorem:convergence-of-gamma-similarity}
in Appendix~\ref{appx:details-of-task-similarity:section},
\begin{equation}
\begin{aligned}
  \hat{s}(\Dl_i, \Dl_j) \coloneqq \frac{
    1 - \dtv(p_i, p_j)
  }{
    1 + \dtv(p_i, p_j)
  }
\end{aligned}
\end{equation}
almost surely converges
to the $\gamma$-set similarity $s(\Xg_i, \Xg_j)$
if we can guarantee the strong consistency of $p(\xv|\Dl_m)$
for all $m = 1,\dots,T$
where we define $p_m \coloneqq p(\xv | \Dl_m)$,
$t_m$ as a meta-task
for $m = 2, \dots, T$
and $t_1$ as the target task,
\begin{equation}
\begin{aligned}
  \dtv(p_i, p_j) \coloneqq 
  \frac{1}{2}\int_{\xv \in \X} | p(\xv|\Dl_i) - p(\xv|\Dl_j)| d\xv 
\end{aligned}
\label{main:methods:eq:total-variation}
\end{equation}
is the total variation distance,
and $p(\xv|\Dl_i)$ is estimated by KDE.
Note that $\dtv(p_i, p_j)$
is approximated simply via Monte-Carlo sampling.
In short, we need to compute:
\begin{enumerate}
  \item KDEs of the top-$\gamma$-quantile observations in $\D_m$, and
  \item $\dtv$ between the target task and each meta-task.
\end{enumerate}
Then we define the task kernel as follows:
\begin{eqnarray}
  k_t(t_i, t_j) = \left\{
  \begin{array}{ll}
    \frac{1}{T}
     \hat{s}(\Dl_{i}, \Dl_{j})
     & (i \neq j) \\
     1 - \frac{1}{T}\sum_{k \neq i}
     \hat{s}(\Dl_i, \Dl_k)
     & (i = j)
  \end{array}
  \right..
  \label{main:methods:eq:task-similarity}
\end{eqnarray}
Note that our task kernel is not strictly a kernel function as our task kernel does not satisfy semi-positive definite although it is still symmetric.
The kernel is defined so that the summation over all tasks is $1$, and then KDEs are built as follows:
\begin{equation}
\begin{aligned}
  p(\xv, t | \Dv^\prime) &=
  \frac{1}{N_{\mathrm{all}}^\prime}
  \sum_{m=1}^T k_t(t, t_m)
  \sum_{n=1}^{N_m^\prime} k_x(\xv, \xv_{m,n}) \\
  &= \frac{1}{N_{\mathrm{all}}^\prime}\sum_{m=1}^T
  N_m^\prime k_t(t, t_m)
  p(\xv | \D_m^\prime),
\end{aligned}
\end{equation}
where $\Dv^\prime \coloneqq \{\D_m^\prime\}_{m=1}^T$
is a set of subsets of the observations on the $m$-th
task $\D^\prime_m = \{(\xv_{m,n}, f_m(\xv_{m,n}))\}_{n=i}^{i + N^\prime_m - 1}$, and
$N_{\mathrm{all}}^\prime = \sum_{m=1}^T N_m^\prime$.
In principle, $\D^\prime_m$ could be either $\Dl_m$ or $\Dg_m$.
The advantages of this formulation are to
(1) not be affected by the information from another task $t_m$
if the task is \emph{dissimilar} from the target task $t_1$,
i.e. $\hat{s}(t_1, t_m) = 0$,
and
(2) asymptotically converge to the original formulation as
the sample size goes to infinity,
i.e. $\lim_{N_1^\prime \rightarrow \infty} p(\xv,t|\Dv^\prime) = p(\xv | \D^\prime_1)$.


\begin{algorithm}[tb]
  \caption{Task kernel (after the modifications)}
  \label{main:methods:alg:task-similarity}
  \begin{algorithmic}[1]
    \Statex{$\eta$ (controls the dimension reduction amount), $S$ (Sample size of Monte-Carlo sampling)}
    \Statex{$l_m(\xv) \coloneqq p(\xv|\Dl_m)$ for $m = 1, \dots, T$}
    \For{$d = 1, \dots, D$}\Comment{Dimension reduction}
    \State{Calculate the average HPI $\bar{\mathbb{V}}_d$
      based on Eq.~(\ref{appx:high-dim:eq:anova})
    }
    \EndFor
    \LineComment{Pick dimensions from higher $\bar{\mathbb{V}}_d$}
    \State{Build $\mathcal{S}$ with the top-$\lfloor \log_\eta |\Dl_1| \rfloor$ dimensions}
    \State{Re-build $p^{\mathrm{DR}}_m(\xv|\Dl_m)$ based on Eq.~(\ref{main:high-dim:eq:reduced-kde})}
    \For{$m= 2, \dots, T$}
    \LineComment{Use $S$ samples for Monte-Carlo sampling}
    \State{Calculate $\dtv$ in Eq.~(\ref{main:methods:eq:total-variation})
      with $l^{\mathrm{DR}}_1, l^{\mathrm{DR}}_m$
    }
    \State{Calculate $k_t(t_1, t_m)$ based on Eq.~(\ref{main:methods:eq:task-similarity})}
    \EndFor
    \State{\textbf{return} $k_t$}
  \end{algorithmic}
\end{algorithm}

\begin{algorithm}[tb]
  \caption{Meta-learning TPE}
  \label{main:methods:alg:meta-tpe}
  \begin{algorithmic}[1]
    \State{$N_{\mathrm{init}}$ (the number of initial samples),
      $N_s$ (the number of candidates for each iteration),
      $\gamma$ (the quantile to split $\D_{\cdot}$),
      $\epsilon$ (the ratio of random sampling), $\D_m$ (metadata)}
    \State{$\D_1 \leftarrow \emptyset, \D_{\mathrm{init}} \leftarrow \emptyset$}
    \For{$m = 2, \dots, T$} \Comment{\textcolor{cyan}{Create a warm-start set}}
    \label{main:methods:line:start-of-warm-start}
    \State{Add the top $\lceil N_{\mathrm{init}} / (T - 1) \rceil$ in $\D_m$ to $\D_{\mathrm{init}}$}
    \LineComment{Build KDEs for meta-tasks}
    \State{Sort $\D_m$ and build KDEs $p(\xv|\Dl_m), p(\xv|\Dg_m)$}
    \EndFor
    \For{$n = 1, \dots, N_{\mathrm{init}}$} \Comment{\textcolor{cyan}{Initialization by warm-start}}
    \State{Randomly pick $\xv$ from $\D_{\mathrm{init}}$}
    \State{Pop $\xv$ from $\D_{\mathrm{init}}$}
    \State{$\D_1 \leftarrow \D_1 \cup \{(\xv, f_1(\xv))\}$}
    \EndFor
    \label{main:methods:line:end-of-warm-start}
    \While{Budget is left}
    \State{$\mathcal{S} = \emptyset$}
    \State{Sort $\D_1$ and build KDEs $p(\xv|\Dl_1), p(\xv|\Dg_1)$}
    \For{$m = 1, \dots, T$}
    \State{$\{\xv_j\}_{j=1}^{N_s} \sim p(\xv | \Dl_m), \mathcal{S} \leftarrow \mathcal{S} \cup \{\xv_j\}_{j=1}^{N_s}$}
    \EndFor
    \State{\textcolor{cyan}{
        Calculate the task kernel $k_t$ by Algorithm~\ref{main:methods:alg:task-similarity}
      }}
    \If{$r \leq \epsilon$}\Comment{$r \sim \mathcal{U}(0, 1)$, \textcolor{cyan}{$\epsilon$-greedy algorithm}}
    \label{main:methods:line:eps-greedy}
    \State{Randomly sample $\xv$ and set $\xopt \leftarrow \xv$}
    \Else
    \State{Pick $\xopt \in \argmax_{\xv \in \mathcal{S}} \mathrm{EI}_{f^\gamma}[\xv|t_1,\Dv]$}
    \Comment{Eq.~(\ref{main:methods:eq:task-conditioned-acq-fn})}
    \EndIf
    \State{$\D_1 \leftarrow \D_1 \cup \{(\xopt, f_1(\xopt))\}$}
    \EndWhile
  \end{algorithmic}
  \label{main:methods:alg:meta-learn-tpe}
\end{algorithm}

\subsection{When Does Our Meta-Learning Fail?}
\label{main:when-does-method-fail:section}
In this section,
we discuss the drawbacks of our meta-learning method
and provide solutions for them.

\subsubsection{Case I: $\gamma$-Set for Target Task $\Dl_1$ Does Not Approach $\Xg$}
From the assumption of Theorem~\ref{main:methods:theorem:convergence-of-gamma-similarity},
$\Dl_1$ must approach $\Xg$ to approximate the $\gamma$-set similarity precisely;
however, since TPE is not a uniform sampler
and it is a local search method
due to the fact that the AF of TPE is PI~\cite{watanabe2022ctpe,watanabe2023ctpe,song2022general},
it does not guarantee that $\Dl_1$ goes to $\Xg$
and it may even be guided towards non-$\gamma$-set domains.
In this case, our task similarity measure not only obtains
a wrong approximation, but also causes poor solutions.
To avoid this problem, we introduce the $\epsilon$-greedy
algorithm to pick the next configuration instead of the greedy algorithm.
By introducing the $\epsilon$-greedy algorithm,
we obtain the following theorem,
and thus we can guarantee more correct or tighter similarity approximation:
\begin{theorem}
  If we use the $\epsilon$-greedy policy ($\epsilon \in (0, 1)$) for $\mathrm{TPE}$ to choose
  the next candidate $\xv$ for a noise-free objective function $f(\xv)$
  defined on search space $\X$ with at most a countable number of configurations,
  and we use a $\mathrm{KDE}$ whose distribution converges to the empirical distribution as
  the number of samples goes to infinity,
  then a set of the top-$\gamma$-quantile observations $\Dl_1$ is almost surely
  a subset of $\Xg$.
  \label{main:methods:theorem:gamma-set-converges}
\end{theorem}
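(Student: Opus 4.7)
The plan is to use the persistent random exploration provided by the $\epsilon$-greedy branch, together with the noise-free and at-most-countable assumptions, to force the eventual evaluation of every point in $\X$; once this happens, the top-$\gamma$-quantile of the observations must consist of elements of $\Xg$ by definition. The KDE-consistency hypothesis listed among the premises is used elsewhere in the framework (for instance in Theorem~1) but plays no essential role in this particular subset statement.

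First I would fix an arbitrary $\xv \in \X$ and let $\pi(\xv) > 0$ denote the probability (mass) assigned to $\xv$ by the random sampler invoked in the $\epsilon$-greedy branch; since $\X$ is at most countable, a strictly positive prior exists. At each iteration $n$, the random branch is chosen with probability $\epsilon$ independently of the history, and conditional on being in that branch, $\xv$ is drawn with probability $\pi(\xv)$. Hence the unconditional per-iteration probability of querying $\xv$ is at least $\epsilon \pi(\xv) > 0$, and these events are independent across iterations.

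Second, because $\sum_{n} \epsilon \pi(\xv) = \infty$, the second Borel--Cantelli lemma implies that $\xv$ is queried infinitely often with probability one. Taking a countable union over $\xv \in \X$, almost surely every configuration in $\X$ is observed at least once after finitely many iterations. Combining this with the noise-free assumption, the objective value of every configuration is eventually known exactly, so the empirical ranking of $\D_1$ coincides with the true ranking of $f$ on $\X$. By the definition of $\Xg$ as the top-$\gamma$-quantile configurations of $\X$, the top-$\gamma$-quantile observations $\Dl_1$ are then evaluations of elements of $\Xg$, proving $\Dl_1 \subseteq \Xg$ almost surely.

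The main obstacle will be handling finite-sample effects carefully: before the almost-sure exploration time is reached, $\Dl_1$ can contain configurations outside $\Xg$, and the $\lceil \gamma |\D_1| \rceil$ cut-off fluctuates as observations accumulate, so the claim must be read as an asymptotic (eventually-always) statement rather than an anytime guarantee. Ties in $f$-values on the boundary of $\Xg$ also require a tie-breaking convention inherited from the definition of $\Xg$, and duplicate observations of the same configuration mean the conclusion is most naturally phrased in terms of the set of distinct configurations in $\Dl_1$ rather than the multiset of observations.
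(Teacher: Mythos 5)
Your Borel--Cantelli step is fine and matches the paper's own ``fact 2'' (the random branch almost surely covers every configuration of the discretized space), but the deduction you draw from it is where the proof breaks. You write that once every configuration has been observed, ``the top-$\gamma$-quantile observations $\Dl_1$ are then evaluations of elements of $\Xg$.'' This conflates two different quantiles: $\Xg$ is the top-$\gamma$ quantile of the \emph{configuration space} under the uniform/Lebesgue measure, whereas $\Dl_1$ is the top $\lceil\gamma|\D_1|\rceil$ portion of the \emph{observation multiset}, whose empirical distribution is not uniform because the greedy branch (executed with probability $1-\epsilon$) biases sampling. For $\Dl_1\subseteq\Xg$ one needs at least a $\gamma$ fraction of the $N$ observations (counted with multiplicity) to satisfy $f(\xv)\le f^\gamma$; mere coverage of every point once contributes only $O(|\X|/N)$ of that mass. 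If the greedy branch were to concentrate on a region outside $\Xg$ --- which is exactly the failure mode (``Case I'') that motivates the theorem --- the fraction of observations in $\Xg$ would be roughly $\epsilon\gamma<\gamma$ and $\Dl_1$ would contain points outside $\Xg$, so your conclusion does not follow from coverage alone.

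The missing idea is precisely the one you dismissed: the KDE-convergence hypothesis is essential. The paper uses it to show that, since every configuration eventually has kernel mass at least $\epsilon/(k+1)^D$ under both $p(\xv|\Dl_N)$ and $p(\xv|\Dg_N)$, the probability-of-improvement form of the acquisition function converges to $1$ on $\Dl_N\setminus\Dg_N$ and to $0$ on $\Dg_N\setminus\Dl_N$, so the greedy branch asymptotically always selects from the current $\Dl_N$. A case analysis ($\Dl_N$ contained in a $\gamma'$-set with $\gamma'\le\gamma$, versus $\Dl_N$ containing all of the true $\gamma$-set) then shows each greedy pick lands in $\Xg$ with probability strictly exceeding $\gamma$, whence the overall per-step probability is $\epsilon\gamma+(1-\epsilon)\cdot(\text{something}>\gamma)>\gamma$. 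It is this strict excess of observation mass inside $\Xg$ --- not exhaustive coverage --- that forces the top-$\gamma$ quantile of the observations into $\Xg$. Without an argument controlling where the greedy branch sends its samples, the statement is false, so the proof cannot go through along the route you propose.
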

The proof is provided in Appendix~\ref{appendix:proofs:subsection:proof-of-gamma-set-convergence}.
Intuitively speaking, this theorem states that when we use the $\epsilon$-greedy algorithm,
$\Dl_1$ will not include any configurations worse than the top-$\gamma$ quantile 
if we have a sufficiently large number of observations.
Therefore, the task similarity is correctly or pessimistically estimated.
Notice that we use the bandwidth selection used by \citewithname{Falkner}{falkner2018bohb},
which satisfies the assumption about the KDE in Theorem~\ref{main:methods:theorem:gamma-set-converges}.

\begin{figure*}
  \centering
  \includegraphics[width=0.92\textwidth]{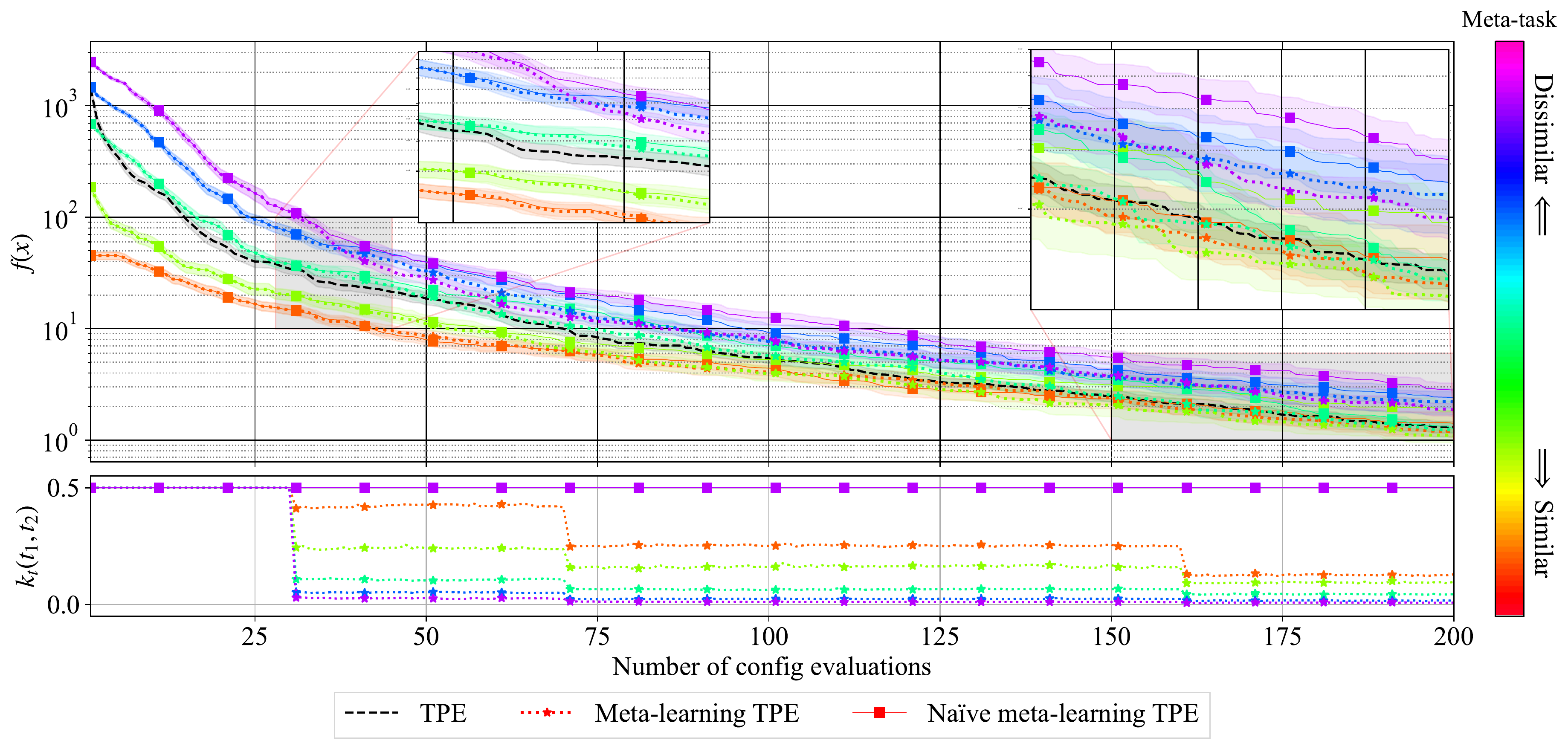}
  \vspace{-2mm}
  \caption{
    The comparison of the convergence of TPE and meta-learning TPE
    based on the task similarity.
    $c^\star = 0$ is identical to the target task
    and tasks become dissimilar as $c^\star$ becomes larger.
    \textbf{Top}: each line except the black line is the performance curves
    of meta-learning TPE on differently similar tasks (orange is similar and purple is dissimilar).
    Dotted lines with $\star$ markers are for
    meta-learning TPE
    and solid lines with $\blacksquare$ markers are for
    na\"ive meta-learning TPE.
    Weak-color bands show the standard error of the objective function value
    over $50$ independent runs.
    \textbf{Bottom}:
    the medians of the task weight on the meta-task (higher is similar).
  }
  \vspace{-2mm}
  \label{appx:case-I:fig:similarity-vs-convergence}
\end{figure*}

\begin{figure}[t]
  \centering
  \includegraphics[width=0.48\textwidth]{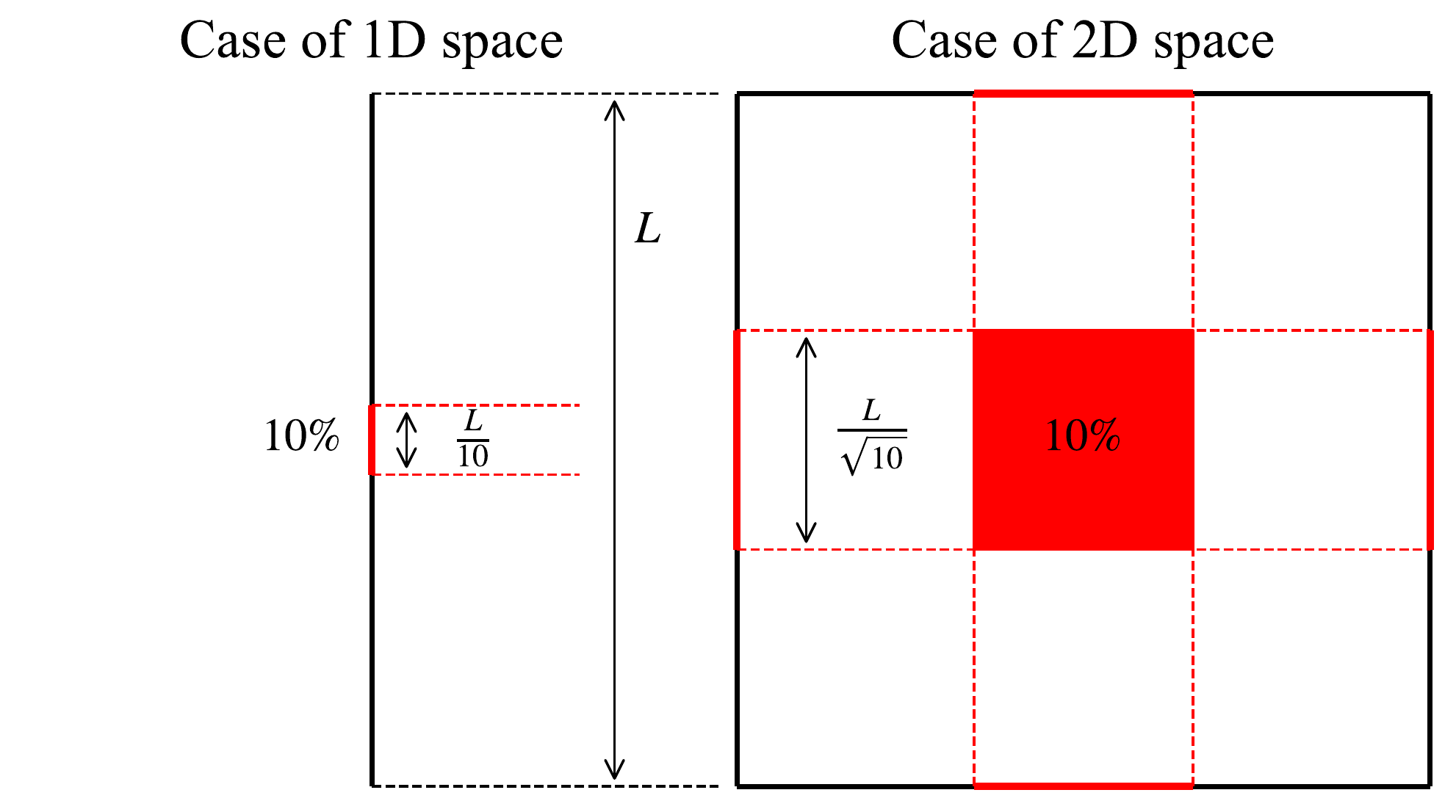}
  \caption{
    The conceptual visualization where the top-$10\%$ domain
    for each dimension becomes larger when the importance
    of each dimension is same and there is no interaction between dimensions.
    The thick black lines are the edges of each domain
    and the red lines are the important domains for each dimension.
    The red lines become longer as the dimensionality becomes higher and it implies that the marginal $\gamma$-set PDF approaches the uniform PDF as $D$ goes to $\infty$.
  }
  \label{appx:case-II:fig:high-dim-conceptual}
\end{figure}

\subsubsection{Case II: Search Space Dimension $D$ Is High}
When the dimensionality $D$ is high, the approximation of the $\gamma$-set similarity
is easily biased.
In Figure~\ref{appx:case-II:fig:high-dim-conceptual},
we provide a concrete example,
where we consider $f(\xv) = \|R\xv\|_1$.
Note that $\xv \in [-1/2, 1/2]^D$
and $R \in \mathbb{R}^{D\times D}$ is the rotation matrix in this example.
The $\gamma$-set of this example is $\Xg = [-\gamma^{1/D}/2, \gamma^{1/D}/2]$.
As $\lim_{D\rightarrow \infty}\gamma^{1/D} = 1$,
which can be seen from the fact that the red lines become longer in 2D case,
the marginal $\gamma$-set PDF $p_d(x_d|\Dl) \coloneqq \int_{\xv_{-d} \in \X_{-d}} p(\xv|\Dl) d\xv_{-d}$ for each dimension (roughly speaking, the red lines for each dimension in Figure~\ref{appx:case-II:fig:high-dim-conceptual} show the region where the marginal $\gamma$-set PDF exhibits high density) approaches the uniform PDF in this example.
However, the marginal $\gamma$-set PDF for each dimension
will not converge to the uniform PDF when
we have only a few observations.
In fact, it is empirically known that
the effective dimension
$D_e$ in HPO is typically much lower than $D$~\cite{bergstra2012random}.
This could imply that the marginal $\gamma$-set PDF for most dimensions
go to the uniform PDF
and only a fraction of dimensions have non-uniform PDF.
In such cases, the following holds:
\begin{proposition}
  If some dimensions are trivial on two tasks,
  the $\gamma$-set similarity between those tasks
  is identical irrespective of
  with or without those dimensions.
  \label{main:methods:proposition:trivial-dims-not-matter}
\end{proposition}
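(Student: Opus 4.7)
The plan is to unpack what a \emph{trivial dimension} means and then exploit the resulting product structure of the $\gamma$-set. Roughly, a dimension $d$ is trivial on task $t$ when the objective $f_t$ does not effectively depend on $x_d$, so that membership of $\xv$ in the $\gamma$-set is independent of the coordinate $x_d$. Formalising this, if $T \subseteq \{1, \dots, D\}$ is the index set of dimensions that are trivial on both tasks $i$ and $j$, and $S$ denotes its complement, then for each task the $\gamma$-set factorises as $\Xg_i = \Xg_{i,S} \times \X_T$ and $\Xg_j = \Xg_{j,S} \times \X_T$, where $\Xg_{\cdot, S}$ is the $\gamma$-set restricted to the non-trivial coordinates and $\X_T = \prod_{d \in T} \X_d$ is the full product of the trivial coordinate domains.

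Once this factorisation is in place, the proof reduces to a direct computation on product sets. First I would observe that set operations distribute over Cartesian products: $\Xg_i \cap \Xg_j = (\Xg_{i,S} \cap \Xg_{j,S}) \times \X_T$ and $\Xg_i \cup \Xg_j = (\Xg_{i,S} \cup \Xg_{j,S}) \times \X_T$, since the $\X_T$ components agree. Next I would pass to Lebesgue measures on the corresponding product spaces; by Fubini (or simply by the defining property of the product measure),
\begin{equation*}
|\Xg_i \cap \Xg_j| = |\Xg_{i,S} \cap \Xg_{j,S}| \cdot |\X_T|, \qquad |\Xg_i \cup \Xg_j| = |\Xg_{i,S} \cup \Xg_{j,S}| \cdot |\X_T|.
\end{equation*}

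The final step is to plug these into the intersection-over-union definition of the $\gamma$-set similarity $s(\Xg_i, \Xg_j) = |\Xg_i \cap \Xg_j| / |\Xg_i \cup \Xg_j|$. The common factor $|\X_T|$ cancels from numerator and denominator (well defined as long as $|\X_T|$ is positive and $\Xg_{i,S} \cup \Xg_{j,S}$ has positive measure, which is implied by $\Xg_i \cup \Xg_j$ being non-degenerate), leaving $s(\Xg_i, \Xg_j) = |\Xg_{i,S} \cap \Xg_{j,S}| / |\Xg_{i,S} \cup \Xg_{j,S}| = s(\Xg_{i,S}, \Xg_{j,S})$. This is precisely the claim: dropping the trivial coordinates does not change the similarity value.

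The only step that is more than routine is fixing the definition of \emph{trivial dimension} so the factorisation actually holds; once one commits to ``$f_t$ is constant in $x_d$'' (equivalently, the top-$\gamma$-quantile super-level-set is invariant under translations along coordinate $d$), the rest is bookkeeping with product measures. I would therefore expect to spend a sentence or two in the proof carefully stating this definition and noting that Definition~\ref{main:background:def:gamma-set} combined with Fubini immediately yields the required product form, and the remainder of the argument is a two-line cancellation.
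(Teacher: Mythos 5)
Your proposal is correct and follows essentially the same route as the paper's proof: the paper likewise uses the fact that a trivial dimension makes the $\gamma$-set a cylinder in that coordinate (the indicator $\indic{\xv \in \Xg \mid \xv_{-d}}$ is a.e.\ constant in $x_d$), integrates that coordinate out via Fubini, and cancels the common factor $\mu_d(\X_d)$ in the intersection-over-union ratio. The only cosmetic difference is that the paper works with marginal integrals of indicator functions under the almost-everywhere form of the trivial-dimension definition rather than an exact Cartesian-product factorisation, which takes care of the null-set caveat your parenthetical alludes to.
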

Roughly speaking, a trivial dimension is a dimension that does not contribute to $f(\xv)$ at all.
The formal definition of the trivial dimensions and the proof are provided in Appendix~\ref{appendix:proofs:subsection:proof-of-trivial-dims-not-matter}.
As HP selection does not change the $\gamma$-set similarity under such circumstances,
we would like to employ a dimension reduction method
and we choose an HPI-based dimension reduction.
The reason behind this choice is that 
HPO often has categorical parameters
and other methods, such as principle component analysis and  
singular value decomposition,
mix up categorical and numerical parameters.
In this paper, we use PED-ANOVA \cite{watanabe2023ped},
which computes HPI for each dimension via Pearson divergence between the marginal $\gamma$-set PDF
and the uniform PDF.

Algorithm~\ref{main:methods:alg:task-similarity}
includes the pseudocode of the dimension reduction.
We first compute HPIs in Eq.~(16) by \citewithname{Watanabe}{watanabe2023ped} for each dimension and
take the average of HPI:
\begin{equation}
\begin{aligned}
  \mathbb{V}_{d,m} &\coloneqq \gamma^2
  \mathbb{E}_{x_d \sim \X_d}\biggl[
    \biggl(
      \frac{p_d(x_d | \Dl_m)}{u(\X_d)} - 1
    \biggr)^2
  \biggr], \\
  \bar{\mathbb{V}}_d &\coloneqq
  \frac{1}{T}\sum_{m=1}^T \mathbb{V}_{d,m} \\
\end{aligned}
\label{appx:high-dim:eq:anova}
\end{equation}
where $u(\X_d)$ is the uniform PDF defined on $\X_d$.
Then we pick the top-$\lfloor \log_\eta |\Dl_1| \rfloor$ dimensions
with respect to $\bar{\mathbb{V}}_d$
and define the set of dimensions $\mathcal{I} \in 2^{\{1,\dots,D\}}$.
While we compute the original KDE via:
\begin{equation}
\begin{aligned}
  p(\xv | \D^\prime) = \frac{1}{N}\sum_{n=1}^N
  \prod_{d=1}^D k_d(x_d, x_{d,n})
\end{aligned}
\end{equation}
where
$\D^\prime \coloneqq \{(x_{1,n}, x_{2,n},\dots, x_{D,n}, f(\xv_n))\}_{n=1}^N$
and $k_d$ is the kernel function for the $d$-th dimension,
we compute the reduced PDF via:
\begin{equation}
\begin{aligned}
  p^{\mathrm{DR}}(\xv | \D^\prime) = \frac{1}{N}\sum_{n=1}^N
  \prod_{d \in \mathcal{I}} k_d(x_d, x_{d,n}).
  \label{main:high-dim:eq:reduced-kde}
\end{aligned}
\end{equation}

\subsection{Algorithm Description}
Algorithm~\ref{main:methods:alg:meta-tpe}
presents the whole pseudocode of our meta-learning TPE
and the color-coding shows our propositions.
To stabilize the approximation of the task kernel,
we employ the dimension reduction shown in
Algorithm~\ref{main:methods:alg:task-similarity}
and the $\epsilon$-greedy algorithm at the optimization
of the AF in Line~\ref{main:methods:line:eps-greedy} of
Algorithm~\ref{main:methods:alg:meta-tpe}
as discussed in Section~\ref{main:when-does-method-fail:section}.
Furthermore,
we use the warm-start initialization as seen in
Lines~\ref{main:methods:line:start-of-warm-start} -- \ref{main:methods:line:end-of-warm-start}
of Algorithm~\ref{main:methods:alg:meta-tpe}.
The warm-start further speeds up optimizations.
Note that we apply the same warm-start initialization to all meta-learning methods
for fair comparisons in the experiments.
To extend our method to MO settings,
all we need is to employ a rank metric
$R: \mathbb{R}^m \rightarrow \mathbb{R}$ for an objective vector $\fv$
as mentioned in Appendix~\ref{appx:generalization-tpe:section}.
In this paper, we consistently use the non-domination rank
and the crowding distance for all methods to realize fair comparisons.

\begin{figure*}[t]
  \centering
  \includegraphics[width=0.98\textwidth]{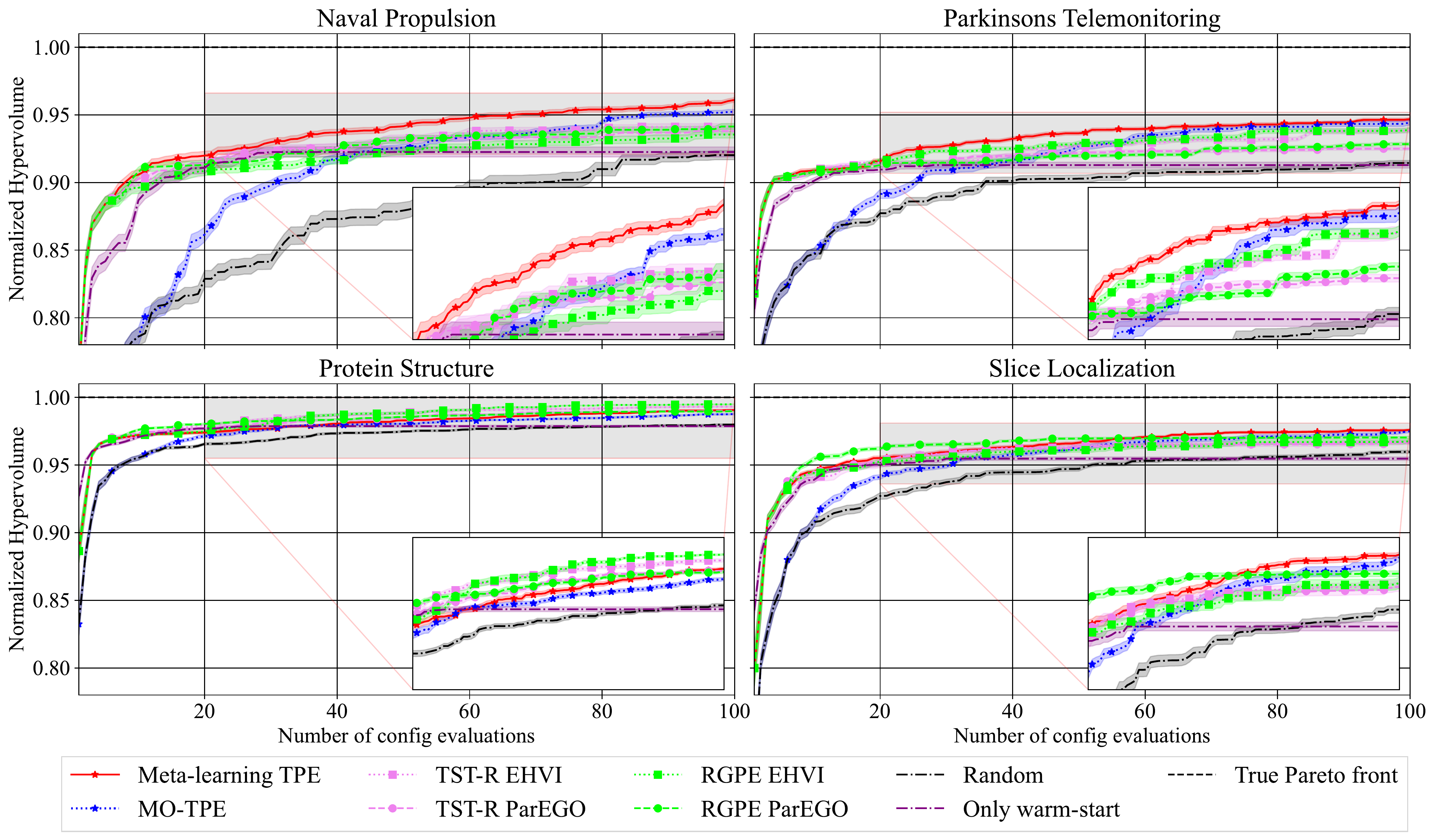}
  \caption{
    The normalized HV over time on four
    joint neural architecture search and hyperparameter optimization benchmarks (HPOlib)
    from HPOBench.
    Each method was run with $20$ different random seeds
    and the weak-color bands present standard error.
    The small inset figures in each figure
    are the magnified gray areas.
    See Appendix~\ref{appx:additional-results:section} for the Pareto fronts achieved by $50\%$~\protect\cite{watanabe2023pareto} of the runs.
  }
  \label{main:experiments:fig:hv-hpolib}
\end{figure*}

\subsection{Validation of Modifications}

To see the effect of the task kernel,
we conduct an experiment using the ellipsoid function
$f(\xv | c) \coloneqq f(x_1, \dots, x_4 | c) = \sum_{d = 1}^4 5^{d - 1} (x_d - c)^2$
defined on $[-5, 5]^4$.
Along with the original TPE, we optimized $f(\xv | c = 0)$
by meta-learning TPE using the randomly sampled $100$ observations from
$f(\xv | c_\star)$ where $c^\star \in [0, 1, \dots, 4]$
(each run uses only one of $[0, 1, \dots, 4]$) as metadata.
Furthermore, we also evaluated na\"ive meta-learning TPE
that considers $k_t(t_i, t_j) = 1/T$ for all pairs of tasks.
All control parameter settings followed Section~\ref{main:experiments:section}
except we evaluated $200$ configurations.

In Figure~\ref{appx:case-I:fig:similarity-vs-convergence},
we present the result.
The top figure shows the performance curve
and the bottom figure shows the weight ($k_t(t_1, t_2) \in [0, 1]$)
on the meta-task.
As seen in the figure, the performance rank is proportional to
the task similarity in the early stage of the optimizations,
and thus meta-learning TPE on the dissimilar tasks performed poorly in the beginning.
However, as the number of evaluations increases,
the performance curves of dissimilar meta-tasks
quickly approach that of TPE after $30$ evaluations
where $\lfloor \log_{\eta}|\Dl_1| \rfloor$ first becomes non-zero
for $\eta = 2.5$.
We can also see the task weight is also ordered
by the similarity between the target task and the meta-task.
Thanks to this effect, on the dissimilar tasks (purple, blue lines), our method starts to recover
the performance of TPE from that point (see the first inset figure)
and our method showed closer performance to the original TPE compared to the na\"ive meta-learning TPE
(see the second inset figure).
This result demonstrates the robustness of our method
to the knowledge transfer from dissimilar meta-tasks.
For similar tasks (light green, orange lines), our method accelerates at the early stage
and slowly converges to the performance of TPE.
Notice that since we use random search for the metadata and the observations are from the meta-learning TPE sampler, which is obviously a non-i.i.d sampler due to the iterative update nature, the top-$\gamma$-quantile in observations obtained from our method is concentrated in a subset of the true top-$\gamma$-quantile as stated in Theorem~\ref{main:methods:theorem:gamma-set-converges}.
It implies that the weight for meta-task is expected to
decrease over time even if the target task is identical to meta-tasks.

\begin{figure*}[t]
  \centering
  \includegraphics[width=0.98\textwidth]{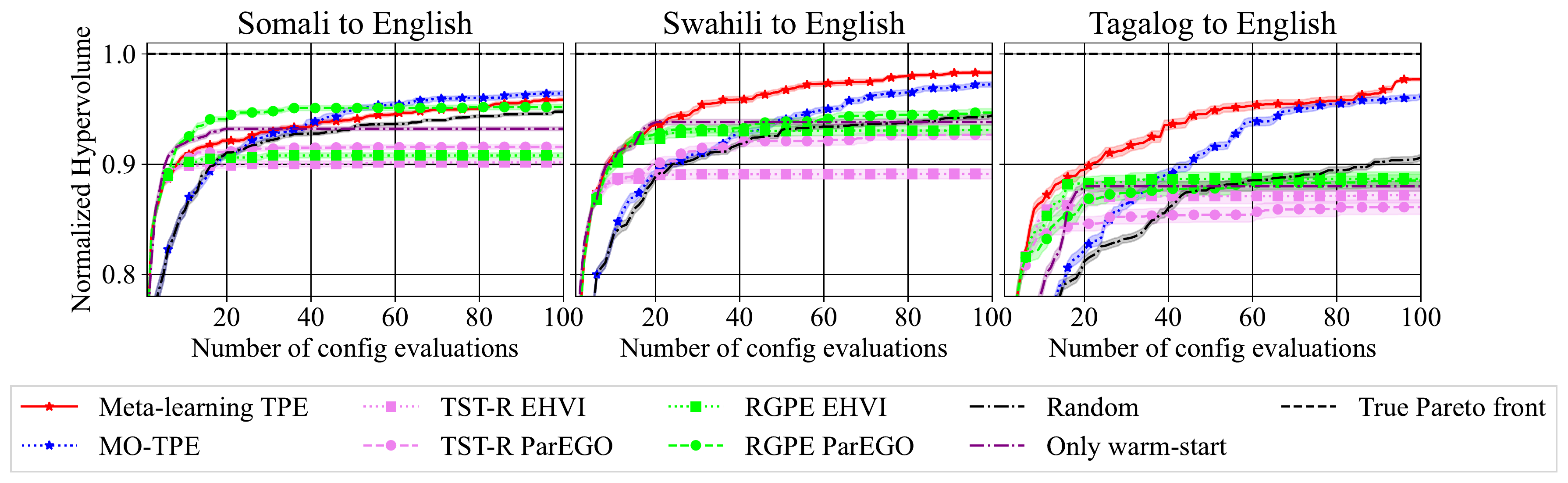}
  \caption{
    The normalized HV over time on NMT-Bench.
    Each method was run with $20$ different random seeds
    and the weak-color bands present standard error.
    See Appendix~\ref{appx:additional-results:section} for the Pareto fronts achieved by $50\%$~\protect\cite{watanabe2023pareto} of the runs.
  }
  \label{main:experiments:fig:hv-nmt}
\end{figure*}

\section{Experiments}
\label{main:experiments:section}

\subsection{Setup}
In the experiments, we optimize
two metrics (a validation loss or accuracy metric, and runtime)
on four joint NAS \& HPO benchmarks (HPOlib)~\cite{klein2019tabular} in HPOBench~\cite{eggensperger-neuripsdbt21},
as well as NMT-Bench~\cite{zhang2020reproducible}.
The baselines are as follows:
\begin{enumerate}
  \vspace{-1mm}
  \item RGPE either with EHVI or ParEGO,
  \vspace{-1mm}
  \item TST-R either with EHVI or ParEGO,
  \vspace{-1mm}
  \item Random search,
  \vspace{-1mm}
  \item Only warm-start (top-$10\%$ configurations in metadata),
  \vspace{-1mm}
  \item MO-TPE~\cite{ozaki2020multiobjective,ozaki2022multiobjective}.
  \vspace{-1mm}
\end{enumerate}
RGPE~\cite{feurer2018practical} and TST-R~\cite{wistuba2016two} were reported to show the best average performance in
a diverse set of meta-learning BO methods~\cite{feurer2018practical}.
Note that since RGPE and TST-R require a rank metric
to compute the ranking loss,
we used non-domination rank and crowding distance,
which we use for meta-learning TPE as well.
Each meta-learning method 
uses $100$ random configurations from the other datasets in each tabular benchmark
and uses the warm-start
initialization ($N_{\mathrm{init}} = 5$) in Algorithm~\ref{main:methods:alg:meta-learn-tpe}.
\texttt{Only warm-start} serves as an indicator of how good the initialization could be and we can judge whether warm-start helps or the meta-learning methods help. 
In this setup, Algorithm~\ref{main:methods:alg:task-similarity} takes $1.0 \times 10^{-4}$ seconds
for a $10D$ space with $200$ observations for $5$ meta-tasks.
In Appendix~\ref{appx:details-of-experiments:section}, we describe more details about control parameters of each method
and tabular benchmarks and discuss the effect of the control parameter $\eta$ and the number of meta-tasks on the performance.

The performance of each experiment was measured via the normalized HV.
When we define the worst (maximum) and the best (minimum) values of each objective as
$f^{\max}_i, f^{\min}_i$ for $i \in \{1,\dots,M\}$,
the normalized HV is computed as:
\begin{equation}
\begin{aligned}
  \prod_{i=1}^M\frac{ f^{\max}_i - f_i}{f^{\max}_i - f^{\min}_i}.
\end{aligned}
\end{equation}
In principle, the normalized HV is better when it is higher
and the possible best value is 1, which is shown as \emph{True Pareto front}.
Although the normalized HV curve tells
us how much each method could improve solution sets,
it does not visualize how solutions distribute
in the objective space (on average).
Therefore, we also provide
the $50\%$ empirical attainment surfaces~\cite{fonseca1996performance}
in Appendix~\ref{appx:additional-results:section}.

\subsection{Results on Real-World Tabular Benchmarks}
Figure~\ref{main:experiments:fig:hv-hpolib}
shows the results on HPOlib.
For all datasets, \texttt{Only warm-start} quickly yields
results slightly worse than Random search with as many as $100$ function evaluations.
This implies that the knowledge transfer surely helps
at the early stage of each optimization,
but each method still needs to explore better configurations.
Our meta-learning TPE method shows the best performance curves
except for \texttt{Protein Structure};
however, our method did not exhibit
the best performance on \texttt{Protein Structure},
where its performance is still competitive.
Furthermore, while we found out that \texttt{Protein Structure}
is relatively dissimilar from the other benchmarks (as discussed in Appendix~\ref{appx:additional-results:section}),
our method could still outperform the non-transfer MO-TPE.

Figure~\ref{main:experiments:fig:hv-nmt}
shows the results on NMT-Bench.
As in HPOlib, \texttt{Only warm-start} quickly yields
results slightly worse than Random search with as many as $100$ function evaluations
for all datasets in NMT-Bench as well.
On the other hand, while RGPE and TST-R
exhibit performance indistinguishable from \texttt{Only warm-start}
in most cases,
our method still improved until the end.
This implies that \texttt{Only warm-start} is not sufficient in this task
and our method could make use of the knowledge from metadata.
However, our meta-learning TPE method struggled in \texttt{Somali to English},
which was dissimilar to the other datasets as discussed
in Appendix~\ref{appx:additional-results:section}.
Although our method did not exhibit the best performance in this case,
it still recovered well with enough observations and got the best performance in the end.

\section{Conclusion}

In this paper, we introduced a multi-task training method
for TPE and demonstrated the performance of our method.
Our method measures the similarity between tasks
using the intersection over union
and computes the task kernel based on the similarity.
As the vanilla version of this simple meta-learning method has some drawbacks,
we employed the $\epsilon$-greedy algorithm and the dimension reduction
method to stabilize our task kernel.
In the experiment using a synthetic function,
we confirmed that our task kernel correctly
ranks the task similarity
and our method could demonstrate
robust performance over various task similarities.
In the real-world experiments,
we used two tabular benchmarks of expensive HPO problems.
Our method could outperform other methods
in most settings and still exhibit competitive performance in the worst case of dissimilar tasks.
Our method's strong performance is also backed by winning the \emph{AutoML 2022 competition on ``Multiobjective Hyperparameter Optimization for Transformers''}.

\clearpage
\clearpage

\section*{Acknowledgments}
The authors appreciate the valuable contributions of the anonymous reviewers and helpful feedback from Ryu Minegishi.
Robert Bosch GmbH is acknowledged for financial support.
The authors also acknowledge funding by European Research Council (ERC) Consolidator Grant ``Deep Learning 2.0'' (grant no.\ 101045765).
Views and opinions expressed are however those of the authors only and do not necessarily reflect those of the European Union or the ERC.
Neither the European Union nor the ERC can be held responsible for them.

\begin{center}
  \includegraphics[width=0.3\textwidth]{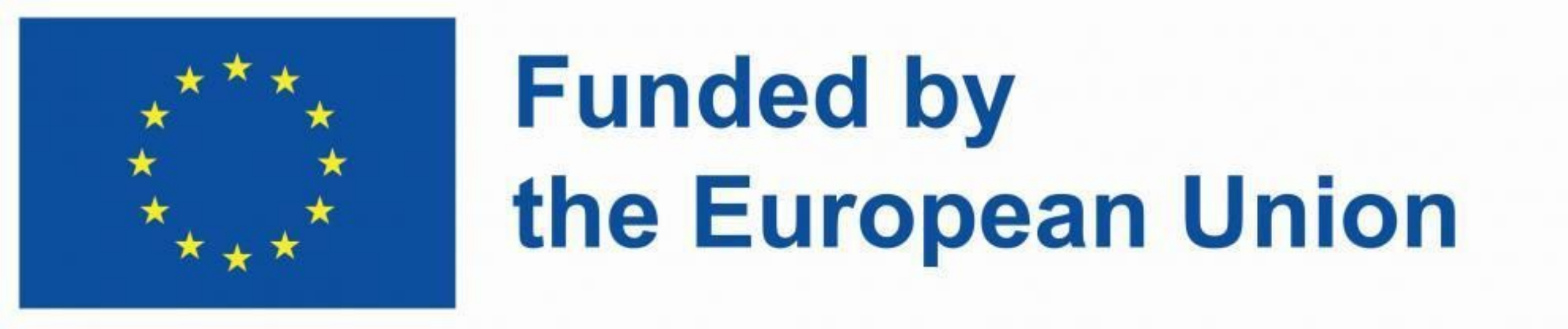}
\end{center}

\bibliographystyle{bib-style}
\bibliography{ref}

\begin{thebibliography}{}

\bibitem[\protect\citeauthoryear{Aitchison and
  Aitken}{1976}]{aitchison1976multivariate}
J.~Aitchison and CG. Aitken.
\newblock Multivariate binary discrimination by the kernel method.
\newblock {\em Biometrika}, 63(3), 1976.

\bibitem[\protect\citeauthoryear{Bergstra and
  Bengio}{2012}]{bergstra2012random}
J.~Bergstra and Y.~Bengio.
\newblock Random search for hyper-parameter optimization.
\newblock {\em Journal of Machine Learning Research}, 13(2), 2012.

\bibitem[\protect\citeauthoryear{Bergstra \bgroup \em et al.\egroup
  }{2011}]{bergstra2011algorithms}
J.~Bergstra, R.~Bardenet, Y.~Bengio, and B.~K{\'e}gl.
\newblock Algorithms for hyper-parameter optimization.
\newblock In {\em Advances in Neural Information Processing Systems}, 2011.

\bibitem[\protect\citeauthoryear{Bergstra \bgroup \em et al.\egroup
  }{2013}]{bergstra2013making}
J.~Bergstra, D.~Yamins, and D.~Cox.
\newblock Making a science of model search: Hyperparameter optimization in
  hundreds of dimensions for vision architectures.
\newblock In {\em International Conference on Machine Learning}, 2013.

\bibitem[\protect\citeauthoryear{Deb \bgroup \em et al.\egroup
  }{2002}]{deb2002fast}
K.~Deb, A.~Pratap, S.~Agarwal, and T.~Meyarivan.
\newblock A fast and elitist multiobjective genetic algorithm: {NSGA-II}.
\newblock {\em Evolutionary Computation}, 6, 2002.

\bibitem[\protect\citeauthoryear{Emmerich \bgroup \em et al.\egroup
  }{2011}]{emmerich2011hypervolume}
MTM. Emmerich, AH. Deutz, and JW. Klinkenberg.
\newblock Hypervolume-based expected improvement: Monotonicity properties and
  exact computation.
\newblock In {\em Congress of Evolutionary Computation}, 2011.

\bibitem[\protect\citeauthoryear{Falkner \bgroup \em et al.\egroup
  }{2018}]{falkner2018bohb}
S.~Falkner, A.~Klein, and F.~Hutter.
\newblock {BOHB}: Robust and efficient hyperparameter optimization at scale.
\newblock In {\em International Conference on Machine Learning}, 2018.

\bibitem[\protect\citeauthoryear{Feurer \bgroup \em et al.\egroup
  }{2018}]{feurer2018practical}
M.~Feurer, B.~Letham, F.~Hutter, and E.~Bakshy.
\newblock Practical transfer learning for {Bayesian} optimization.
\newblock {\em arXiv:1802.02219}, 2018.

\bibitem[\protect\citeauthoryear{Fonseca and
  Fleming}{1996}]{fonseca1996performance}
CM. Fonseca and PJ. Fleming.
\newblock On the performance assessment and comparison of stochastic
  multiobjective optimizers.
\newblock In {\em International Conference on Parallel Problem Solving from
  Nature}, 1996.

\bibitem[\protect\citeauthoryear{Klein and Hutter}{2019}]{klein2019tabular}
A.~Klein and F.~Hutter.
\newblock Tabular benchmarks for joint architecture and hyperparameter
  optimization.
\newblock {\em arXiv:1905.04970}, 2019.

\bibitem[\protect\citeauthoryear{Knowles}{2006}]{knowles2006parego}
J.~Knowles.
\newblock {ParEGO}: A hybrid algorithm with on-line landscape approximation for
  expensive multiobjective optimization problems.
\newblock {\em Evolutionary Computation}, 10, 2006.

\bibitem[\protect\citeauthoryear{Ozaki \bgroup \em et al.\egroup
  }{2020}]{ozaki2020multiobjective}
Y.~Ozaki, Y.~Tanigaki, S.~Watanabe, and M.~Onishi.
\newblock Multiobjective tree-structured {Parzen} estimator for computationally
  expensive optimization problems.
\newblock In {\em Genetic and Evolutionary Computation Conference}, 2020.

\bibitem[\protect\citeauthoryear{Ozaki \bgroup \em et al.\egroup
  }{2022}]{ozaki2022multiobjective}
Y.~Ozaki, Y.~Tanigaki, S.~Watanabe, M.~Nomura, and M.~Onishi.
\newblock Multiobjective tree-structured {Parzen} estimator.
\newblock {\em Journal of Artificial Intelligence Research}, 73, 2022.

\bibitem[\protect\citeauthoryear{Papineni \bgroup \em et al.\egroup
  }{2002}]{papineni2002bleu}
K.~Papineni, S.~Roukos, T.~Ward, and WJ. Zhu.
\newblock {BLEU}: a method for automatic evaluation of machine translation.
\newblock In {\em Annual Meeting of the Association for Computational
  Linguistics}, 2002.

\bibitem[\protect\citeauthoryear{Rudin and others}{1976}]{rudin1976principles}
W.~Rudin et~al.
\newblock {\em Principles of mathematical analysis}, volume~3.
\newblock McGraw-hill New York, 1976.

\bibitem[\protect\citeauthoryear{Song \bgroup \em et al.\egroup
  }{2022}]{song2022general}
J.~Song, L.~Yu, W.~Neiswanger, and S.~Ermon.
\newblock A general recipe for likelihood-free {Bayesian} optimization.
\newblock In {\em International Conference on Machine Learning}, 2022.

\bibitem[\protect\citeauthoryear{Tsybakov}{2008}]{tsybakovintroduction}
AB. Tsybakov.
\newblock {\em Introduction to Nonparametric Estimation}.
\newblock Springer Publishing Company, 2008.

\bibitem[\protect\citeauthoryear{Watanabe and Hutter}{2022}]{watanabe2022ctpe}
S.~Watanabe and F.~Hutter.
\newblock {c-TPE}: Generalizing tree-structured {P}arzen estimator with
  inequality constraints for continuous and categorical hyperparameter
  optimization.
\newblock {\em Gaussian Processes, Spatiotemporal Modeling, and Decision-making
  Systems Workshop at Advances in Neural Information Processing Systems}, 2022.

\bibitem[\protect\citeauthoryear{Watanabe and Hutter}{2023}]{watanabe2023ctpe}
S.~Watanabe and F.~Hutter.
\newblock {c-TPE}: Tree-structured {P}arzen estimator with inequality
  constraints for expensive hyperparameter optimization.
\newblock {\em arXiv:2211.14411}, 2023.

\bibitem[\protect\citeauthoryear{Watanabe}{2023}]{watanabe2023pareto}
S.~Watanabe.
\newblock {P}ython tool for visualizing variability of {P}areto fronts over
  multiple runs.
\newblock {\em arXiv:2305.08852}, 2023.

\bibitem[\protect\citeauthoryear{Wied and
  Wei{\ss}bach}{2012}]{wied2012consistency}
D.~Wied and R.~Wei{\ss}bach.
\newblock Consistency of the kernel density estimator: a survey.
\newblock {\em Statistical Papers}, 53, 2012.

\bibitem[\protect\citeauthoryear{Wistuba \bgroup \em et al.\egroup
  }{2016}]{wistuba2016two}
M.~Wistuba, N.~Schilling, and L.~Schmidt-Thieme.
\newblock Two-stage transfer surrogate model for automatic hyperparameter
  optimization.
\newblock In {\em Joint European Conference on Machine Learning and Knowledge
  Discovery in Databases}, 2016.

\bibitem[\protect\citeauthoryear{Zhang and Duh}{2020}]{zhang2020reproducible}
X.~Zhang and K.~Duh.
\newblock Reproducible and efficient benchmarks for hyperparameter optimization
  of neural machine translation systems.
\newblock {\em Transactions of the Association for Computational Linguistics},
  8, 2020.

\bibitem[\protect\citeauthoryear{Zhang \bgroup \em et al.\egroup
  }{2013}]{zhang2013domain}
K.~Zhang, B.~Sch{\"o}lkopf, K.~Muandet, and Z.~Wang.
\newblock Domain adaptation under target and conditional shift.
\newblock In {\em International Conference on Machine Learning}, 2013.

\end{thebibliography}


\begin{thebibliography}{}

\bibitem[\protect\citeauthoryear{Bader and Zitzler}{2011}]{bader2011hype}
J.~Bader and E.~Zitzler.
\newblock {HypE}: An algorithm for fast hypervolume-based many-objective
  optimization.
\newblock {\em Evolutionary Computation}, 19, 2011.

\bibitem[\protect\citeauthoryear{Bergstra and
  Bengio}{2012}]{bergstra2012random}
J.~Bergstra and Y.~Bengio.
\newblock Random search for hyper-parameter optimization.
\newblock {\em Journal of Machine Learning Research}, 13(2), 2012.

\bibitem[\protect\citeauthoryear{Bergstra \bgroup \em et al.\egroup
  }{2011}]{bergstra2011algorithms}
J.~Bergstra, R.~Bardenet, Y.~Bengio, and B.~K{\'e}gl.
\newblock Algorithms for hyper-parameter optimization.
\newblock In {\em Advances in Neural Information Processing Systems}, 2011.

\bibitem[\protect\citeauthoryear{Bergstra \bgroup \em et al.\egroup
  }{2013}]{bergstra2013making}
J.~Bergstra, D.~Yamins, and D.~Cox.
\newblock Making a science of model search: Hyperparameter optimization in
  hundreds of dimensions for vision architectures.
\newblock In {\em International Conference on Machine Learning}, 2013.

\bibitem[\protect\citeauthoryear{Brochu \bgroup \em et al.\egroup
  }{2010}]{brochu-arXiv10a}
E.~Brochu, V.~Cora, and N.~de~Freitas.
\newblock A tutorial on {Bayesian} optimization of expensive cost functions,
  with application to active user modeling and hierarchical reinforcement
  learning.
\newblock {\em arXiv:1012.2599}, 2010.

\bibitem[\protect\citeauthoryear{Candelieri \bgroup \em et al.\egroup
  }{2022}]{candelieri2022fair}
A.~Candelieri, A.~Ponti, and F.~Archetti.
\newblock Fair and green hyperparameter optimization via multi-objective and
  multiple information source {Bayesian} optimization.
\newblock {\em arXiv:2205.08835}, 2022.

\bibitem[\protect\citeauthoryear{Chen \bgroup \em et al.\egroup
  }{2018}]{chen2018bayesian}
Y.~Chen, A.~Huang, Z.~Wang, I.~Antonoglou, J.~Schrittwieser, D.~Silver, and
  N.~de~Freitas.
\newblock {Bayesian} optimization in {AlphaGo}.
\newblock {\em arXiv:1812.06855}, 2018.

\bibitem[\protect\citeauthoryear{Chugh \bgroup \em et al.\egroup
  }{2016}]{chugh2016surrogate}
T.~Chugh, Y.~Jin, K.~Miettinen, J.~Hakanen, and K.~Sindhya.
\newblock A surrogate-assisted reference vector guided evolutionary algorithm
  for computationally expensive many-objective optimization.
\newblock {\em Evolutionary Computation}, 22, 2016.

\bibitem[\protect\citeauthoryear{Deb \bgroup \em et al.\egroup
  }{2002}]{deb2002fast}
K.~Deb, A.~Pratap, S.~Agarwal, and T.~Meyarivan.
\newblock A fast and elitist multiobjective genetic algorithm: {NSGA-II}.
\newblock {\em Evolutionary Computation}, 6, 2002.

\bibitem[\protect\citeauthoryear{Eggensperger \bgroup \em et al.\egroup
  }{2021}]{eggensperger-neuripsdbt21}
K.~Eggensperger, P.~M^^c3^^bcller, N.~Mallik, M.~Feurer, R.~Sass, A.~Klein,
  N.~Awad, M.~Lindauer, and F.~Hutter.
\newblock {HPOBench}: A collection of reproducible multi-fidelity benchmark
  problems for {HPO}.
\newblock In {\em Advances in Neural Information Processing Systems Track on
  Datasets and Benchmarks}, 2021.

\bibitem[\protect\citeauthoryear{Emmerich \bgroup \em et al.\egroup
  }{2011}]{emmerich2011hypervolume}
MTM. Emmerich, AH. Deutz, and JW. Klinkenberg.
\newblock Hypervolume-based expected improvement: Monotonicity properties and
  exact computation.
\newblock In {\em Congress of Evolutionary Computation}, 2011.

\bibitem[\protect\citeauthoryear{Falkner \bgroup \em et al.\egroup
  }{2018}]{falkner2018bohb}
S.~Falkner, A.~Klein, and F.~Hutter.
\newblock {BOHB}: Robust and efficient hyperparameter optimization at scale.
\newblock In {\em International Conference on Machine Learning}, 2018.

\bibitem[\protect\citeauthoryear{Feurer \bgroup \em et al.\egroup
  }{2015}]{feurer2015initializing}
M.~Feurer, JT. Springenberg, and F.~Hutter.
\newblock Initializing {Bayesian} hyperparameter optimization via
  meta-learning.
\newblock In {\em AAAI Conference on Artificial Intelligence}, 2015.

\bibitem[\protect\citeauthoryear{Feurer \bgroup \em et al.\egroup
  }{2018}]{feurer2018practical}
M.~Feurer, B.~Letham, F.~Hutter, and E.~Bakshy.
\newblock Practical transfer learning for {Bayesian} optimization.
\newblock {\em arXiv:1802.02219}, 2018.

\bibitem[\protect\citeauthoryear{Fonseca and
  Fleming}{1996}]{fonseca1996performance}
CM. Fonseca and PJ. Fleming.
\newblock On the performance assessment and comparison of stochastic
  multiobjective optimizers.
\newblock In {\em International Conference on Parallel Problem Solving from
  Nature}, 1996.

\bibitem[\protect\citeauthoryear{Garnett}{2022}]{garnett-bayesian22}
R.~Garnett.
\newblock {\em {Bayesian Optimization}}.
\newblock Cambridge University Press, 2022.

\bibitem[\protect\citeauthoryear{Guo \bgroup \em et al.\egroup
  }{2018}]{guo2018heterogeneous}
D.~Guo, Y.~Jin, J.~Ding, and T.~Chai.
\newblock Heterogeneous ensemble-based infill criterion for evolutionary
  multiobjective optimization of expensive problems.
\newblock {\em Transactions on Cybernetics}, 49, 2018.

\bibitem[\protect\citeauthoryear{Henderson \bgroup \em et al.\egroup
  }{2018}]{henderson-aaai18a}
P.~Henderson, R.~Islam, P.~Bachman, J.~Pineau, D.~Precup, and D.~Meger.
\newblock Deep reinforcement learning that matters.
\newblock In {\em AAAI Conference on Artificial Intelligence}, 2018.

\bibitem[\protect\citeauthoryear{Hern{\'a}ndez-Lobato \bgroup \em et al.\egroup
  }{2016}]{hernandez2016predictive}
D.~Hern{\'a}ndez-Lobato, J.~Hernandez-Lobato, A.~Shah, and R.~Adams.
\newblock Predictive entropy search for multi-objective {Bayesian}
  optimization.
\newblock In {\em International Conference on Machine Learning}, 2016.

\bibitem[\protect\citeauthoryear{Jones \bgroup \em et al.\egroup
  }{1998}]{jones1998efficient}
D.~Jones, M.~Schonlau, and W.~Welch.
\newblock Efficient global optimization of expensive black-box functions.
\newblock {\em Journal of Global Optimization}, 13, 1998.

\bibitem[\protect\citeauthoryear{Klein and Hutter}{2019}]{klein2019tabular}
A.~Klein and F.~Hutter.
\newblock Tabular benchmarks for joint architecture and hyperparameter
  optimization.
\newblock {\em arXiv:1905.04970}, 2019.

\bibitem[\protect\citeauthoryear{Knowles}{2006}]{knowles2006parego}
J.~Knowles.
\newblock {ParEGO}: A hybrid algorithm with on-line landscape approximation for
  expensive multiobjective optimization problems.
\newblock {\em Evolutionary Computation}, 10, 2006.

\bibitem[\protect\citeauthoryear{Kushner}{1964}]{kushner1964new}
HJ. Kushner.
\newblock A new method of locating the maximum point of an arbitrary multipeak
  curve in the presence of noise.
\newblock 1964.

\bibitem[\protect\citeauthoryear{Nomura \bgroup \em et al.\egroup
  }{2021}]{nomura2021warm}
M.~Nomura, S.~Watanabe, Y.~Akimoto, Y.~Ozaki, and M.~Onishi.
\newblock Warm starting {CMA-ES} for hyperparameter optimization.
\newblock In {\em AAAI Conference on Artificial Intelligence}, 2021.

\bibitem[\protect\citeauthoryear{Ozaki \bgroup \em et al.\egroup
  }{2020}]{ozaki2020multiobjective}
Y.~Ozaki, Y.~Tanigaki, S.~Watanabe, and M.~Onishi.
\newblock Multiobjective tree-structured {Parzen} estimator for computationally
  expensive optimization problems.
\newblock In {\em Genetic and Evolutionary Computation Conference}, 2020.

\bibitem[\protect\citeauthoryear{Ozaki \bgroup \em et al.\egroup
  }{2022}]{ozaki2022multiobjective}
Y.~Ozaki, Y.~Tanigaki, S.~Watanabe, M.~Nomura, and M.~Onishi.
\newblock Multiobjective tree-structured {Parzen} estimator.
\newblock {\em Journal of Artificial Intelligence Research}, 73, 2022.

\bibitem[\protect\citeauthoryear{Pan \bgroup \em et al.\egroup
  }{2018}]{pan2018classification}
L.~Pan, C.~He, Y.~Tian, H.~Wang, X.~Zhang, and Y.~Jin.
\newblock A classification-based surrogate-assisted evolutionary algorithm for
  expensive many-objective optimization.
\newblock {\em Evolutionary Computation}, 23, 2018.

\bibitem[\protect\citeauthoryear{Perrone \bgroup \em et al.\egroup
  }{2018}]{perrone2018scalable}
V.~Perrone, R.~Jenatton, M.~Seeger, and C.~Archambeau.
\newblock Scalable hyperparameter transfer learning.
\newblock In {\em Advances in Neural Information Processing Systems}, 2018.

\bibitem[\protect\citeauthoryear{Perrone \bgroup \em et al.\egroup
  }{2019}]{perrone2019learning}
V.~Perrone, H.~Shen, MW. Seeger, C.~Archambeau, and R.~Jenatton.
\newblock Learning search spaces for {Bayesian} optimization: Another view of
  hyperparameter transfer learning.
\newblock {\em Advances in Neural Information Processing Systems}, 2019.

\bibitem[\protect\citeauthoryear{Ponweiser \bgroup \em et al.\egroup
  }{2008}]{ponweiser2008multiobjective}
W.~Ponweiser, T.~Wagner, D.~Biermann, and M.~Vincze.
\newblock Multiobjective optimization on a limited budget of evaluations using
  model-assisted {S}-metric selection.
\newblock In {\em International Conference on Parallel Problem Solving from
  Nature}, 2008.

\bibitem[\protect\citeauthoryear{Salinas \bgroup \em et al.\egroup
  }{2020}]{salinas2020quantile}
D.~Salinas, H.~Shen, and V.~Perrone.
\newblock A quantile-based approach for hyperparameter transfer learning.
\newblock In {\em International Conference on Machine Learning}, 2020.

\bibitem[\protect\citeauthoryear{Schmucker \bgroup \em et al.\egroup
  }{2020}]{schmucker2020multi}
R.~Schmucker, M.~Donini, V.~Perrone, MB. Zafar, and C.~Archambeau.
\newblock Multi-objective multi-fidelity hyperparameter optimization with
  application to fairness.
\newblock In {\em Meta-Learning Workshop at Advances in Neural Information
  Processing Systems}, 2020.

\bibitem[\protect\citeauthoryear{Shahriari \bgroup \em et al.\egroup
  }{2016}]{shahriari-ieee16a}
B.~Shahriari, K.~Swersky, Z.~Wang, R.~Adams, and N.~de~Freitas.
\newblock Taking the human out of the loop: {A} review of {B}ayesian
  optimization.
\newblock {\em {IEEE}}, 104, 2016.

\bibitem[\protect\citeauthoryear{Song \bgroup \em et al.\egroup
  }{2022}]{song2022general}
J.~Song, L.~Yu, W.~Neiswanger, and S.~Ermon.
\newblock A general recipe for likelihood-free {Bayesian} optimization.
\newblock In {\em International Conference on Machine Learning}, 2022.

\bibitem[\protect\citeauthoryear{Springenberg \bgroup \em et al.\egroup
  }{2016}]{springenberg2016bayesian}
JT. Springenberg, A.~Klein, S.~Falkner, and F.~Hutter.
\newblock {Bayesian} optimization with robust {Bayesian} neural networks.
\newblock In {\em Advances in Neural Information Processing Systems}, 2016.

\bibitem[\protect\citeauthoryear{Swersky \bgroup \em et al.\egroup
  }{2013}]{swersky2013multi}
K.~Swersky, J.~Snoek, and R.~Adams.
\newblock Multi-task {Bayesian} optimization.
\newblock In {\em Advances in Neural Information Processing Systems}, 2013.

\bibitem[\protect\citeauthoryear{Vanschoren}{2019}]{vanschoren2019meta}
J.~Vanschoren.
\newblock Meta-learning.
\newblock In {\em Automated Machine Learning}, pages 35--61. Springer, 2019.

\bibitem[\protect\citeauthoryear{Volpp \bgroup \em et al.\egroup
  }{2020}]{Volpp2020Meta-Learning}
M.~Volpp, LP. Fr^^c3^^b6hlich, K.~Fischer, A.~Doerr, S.~Falkner, F.~Hutter, and
  C.~Daniel.
\newblock Meta-learning acquisition functions for transfer learning in
  {B}ayesian optimization.
\newblock In {\em International Conference on Learning Representations}, 2020.

\bibitem[\protect\citeauthoryear{Wang and Jegelka}{2017}]{wang2017max}
Z.~Wang and S.~Jegelka.
\newblock Max-value entropy search for efficient {Bayesian} optimization.
\newblock In {\em International Conference on Machine Learning}, 2017.

\bibitem[\protect\citeauthoryear{Watanabe and Hutter}{2022}]{watanabe2022ctpe}
S.~Watanabe and F.~Hutter.
\newblock {c-TPE}: Generalizing tree-structured {P}arzen estimator with
  inequality constraints for continuous and categorical hyperparameter
  optimization.
\newblock {\em Gaussian Processes, Spatiotemporal Modeling, and Decision-making
  Systems Workshop at Advances in Neural Information Processing Systems}, 2022.

\bibitem[\protect\citeauthoryear{Watanabe and Hutter}{2023}]{watanabe2023ctpe}
S.~Watanabe and F.~Hutter.
\newblock {c-TPE}: Tree-structured {P}arzen estimator with inequality
  constraints for expensive hyperparameter optimization.
\newblock {\em arXiv:2211.14411}, 2023.

\bibitem[\protect\citeauthoryear{Watanabe \bgroup \em et al.\egroup
  }{2023}]{watanabe2023ped}
S.~Watanabe, A.~Bansal, and F.~Hutter.
\newblock {PED-ANOVA}: Efficiently quantifying hyperparameter importance in
  arbitrary subspaces.
\newblock {\em arXiv:2304.10255}, 2023.

\bibitem[\protect\citeauthoryear{Watanabe}{2023a}]{watanabe2023pareto}
S.~Watanabe.
\newblock {P}ython tool for visualizing variability of {P}areto fronts over
  multiple runs.
\newblock {\em arXiv:2305.08852}, 2023.

\bibitem[\protect\citeauthoryear{Watanabe}{2023b}]{watanabe2023tpe}
S.~Watanabe.
\newblock Tree-structured {P}arzen estimator: Understanding its algorithm
  components and their roles for better empirical performance.
\newblock {\em arXiv:2304.11127}, 2023.

\bibitem[\protect\citeauthoryear{Wistuba \bgroup \em et al.\egroup
  }{2015}]{wistuba2015hyperparameter}
M.~Wistuba, N.~Schilling, and L.~Schmidt-Thieme.
\newblock Hyperparameter search space pruning--a new component for sequential
  model-based hyperparameter optimization.
\newblock In {\em Joint European Conference on Machine Learning and Knowledge
  Discovery in Databases}, 2015.

\bibitem[\protect\citeauthoryear{Wistuba \bgroup \em et al.\egroup
  }{2016}]{wistuba2016two}
M.~Wistuba, N.~Schilling, and L.~Schmidt-Thieme.
\newblock Two-stage transfer surrogate model for automatic hyperparameter
  optimization.
\newblock In {\em Joint European Conference on Machine Learning and Knowledge
  Discovery in Databases}, 2016.

\bibitem[\protect\citeauthoryear{Zhang and Duh}{2020}]{zhang2020reproducible}
X.~Zhang and K.~Duh.
\newblock Reproducible and efficient benchmarks for hyperparameter optimization
  of neural machine translation systems.
\newblock {\em Transactions of the Association for Computational Linguistics},
  8, 2020.

\bibitem[\protect\citeauthoryear{Zhang and Li}{2007}]{zhang2007moea}
Q.~Zhang and H.~Li.
\newblock {MOEA/D}: A multiobjective evolutionary algorithm based on
  decomposition.
\newblock {\em Evolutionary Computation}, 11, 2007.

\bibitem[\protect\citeauthoryear{Zhang \bgroup \em et al.\egroup
  }{2013}]{zhang2013domain}
K.~Zhang, B.~Sch{\"o}lkopf, K.~Muandet, and Z.~Wang.
\newblock Domain adaptation under target and conditional shift.
\newblock In {\em International Conference on Machine Learning}, 2013.

\end{thebibliography}

\ifappendix
\clearpage
\appendix
\section{Details of Meta-Learning TPE}
\label{appx:details-of-meta-learning-tpe:section}

In Appendix, $\mu$ is the Lebesgue measure defined on the search space $\X$ and $\B_D$ is the Borel body over $\mathbb{R}^D$ if not specified.

\subsection{Task-Conditioned Acquisition Function}
\label{appx:details-of-acq-fn:section}
To begin with, we explain the transformation of TPE:
\begin{equation}
\begin{aligned}
  \EI_{f^\gamma}[\xv |\D] &=
  \int_{-\infty}^{f^\gamma} (f^\gamma - f)
  p(f|\xv) df \\
  &= \int_{-\infty}^{f^\gamma} (f^\gamma - f)
  \frac{p(\xv|f, \D)p(f|\D)}{p(\xv|\D)}df \\
  & ~~~~~(\because \mathrm{Bayes'~theorem}) \\
  &= \frac{p(\xv|\Dl)}{p(\xv|\D)}
  \underbrace{\int_{-\infty}^{f^\gamma} (f^\gamma - f) p(f|\D)df}_{
    \mathrm{const~w.r.t.~}\xv
  }
  ~(\because \mathrm{Eq.}~(\ref{eq:l-and-g-trick})) \\
  &\propto \frac{p(\xv|\Dl)}{\gamma p(\xv|\Dl) + (1 - \gamma)p(\xv|\Dg) }\\
  &~\biggl(\because
  p(\xv|\D) = \int_{-\infty}^\infty
  p(\xv|f, \D)p(f|\D)df
  \biggr).
\end{aligned}
\end{equation}
In the same vein, we transform the task-conditioned AF:
\begin{equation}
\begin{aligned}
  \EI_{f^\gamma}[\xv |t, \Dv] &= \int_{-\infty}^{f^\gamma}
  (f^\gamma - f)p(f|\xv, t, \Dv) df \\
  &= \int_{-\infty}^{f^\gamma}
  (f^\gamma - f)
  \frac{p(\xv, t|f, \Dv)p(f|\Dv)}{p(\xv,t|\Dv)}df \\
  &~~~~~(\because \mathrm{Bayes'~theorem}).\\
\end{aligned}
\end{equation}
To further transform, we assume the conditional shift~\cite{zhang2013domain}.
In the conditional shift, we assume that
all tasks have the same target PDF,
i.e. $p(f|t_i) = p(f|t_j)$
(see Figure~\ref{appx:detail-of-acq-fn:fig:identical-target-dist}).
Since TPE considers ranks of observations
and ignore the scale,
the optimization of $f(\xv)$
is equivalent to that of the quantile of $f(\xv)$.
For this reason, the only requirement
to achieve the conditional shift is to
use the same quantile $\gamma$ for the split
of observations $\D_m$ of all meta-tasks. 
In Figure~\ref{appx:detail-of-acq-fn:fig:identical-target-dist},
we visualize two functions
and their quantile functions along
with their target PDFs $p(f)$.
As seen in the figure, the quantile functions
equalize the target PDF
while preserving the order of each point.
Let $f$ be the quantile function (a mapping from $\X$ to $[0, 1]$) for convenience and then we obtain the following:
\begin{eqnarray}
  p(\xv, t|f, \Dv) \coloneqq \left\{
  \begin{array}{ll}
     p(\xv, t | \Dvl) & (f \leq \gamma) \\
     p(\xv, t | \Dvg) & (f > \gamma)
  \end{array}
  \right.
\end{eqnarray}
where $\Dvl \coloneqq \{\Dl_m\}_{m=1}^T$ is a set of the $\gamma$-quantile 
observations
and the conditional shift plays a role to enable us
to have a unique split point, which is at $f = \gamma$. 
Then we can further transform as follows:
\begin{align}
  &\int_{-\infty}^{f^\gamma}
  (f^\gamma - f)
  \frac{p(\xv, t|f, \Dv)p(f|\Dv)}{p(\xv,t|\Dv)}df \nonumber \\
  =& \int_{0}^{\gamma}
  (\gamma - f)
  \frac{p(\xv, t|f, \Dv)p(f|\Dv)}{p(\xv,t|\Dv)}df~(\because f \in [0, 1]) \nonumber
  \\
  =& \frac{p(\xv, t|\Dvl)}{p(\xv, t|\Dv)}
  \underbrace{\int_{0}^\gamma p(f|\Dv) df}_{
    \mathrm{const~w.r.t.~}\xv
  }\nonumber \\
  \propto &\frac{p(\xv, t|\Dvl)}{
    \gamma p(\xv, t | \Dvl) + (1 - \gamma) p(\xv, t | \Dvg) 
  }.
\end{align}

As seen in the equations above,
since TPE always ignores the improvement term
$f^\gamma - f$, we can actually view the AF as probability of improvement (PI).
Note that the equivalence of PI and EI
in the TPE formulation is already reported in
prior works~\citeappx{watanabe2022ctpe,watanabe2023ctpe,song2022general}.
Then the conditional shift can be explained in a more straightforward way.
Since PI requires the binary classification of $\D$ into
$\Dl$ or $\Dg$,
$p(f | \D)$ just depends on our split.
More specifically, 
if we regard $f$ as $\indic{f \leq f^\gamma}$
and then we simply get $p(f | \D)$
as $p(f = 1 | \D) = |\Dl|/|\D| = \gamma$
and $p(f = 0 | \D) = |\Dg|/|\D| = 1 - \gamma$.
For this reason, the assumption is easily satisfied
as long as we fix $\gamma$ for all tasks.
Note that the $\gamma$-quantile in observations is usually better, i.e. $\Dl \rightarrow \X^{\gamma^\prime}$ s.t. $\gamma^\prime < \gamma$ as stated in Theorem~\ref{main:methods:theorem:gamma-set-converges}, than the true $\gamma$-quantile
value $f^\gamma$ with a high probability when we use a non-random sampler
as a non-random sampler usually performs better than random search.
It implies that
the quality of knowledge about top domains will be relatively low in meta-tasks
if we have much more observations on the target task
and our method might not be accelerated drastically although we can guarantee the conditional shift.
We designed our task similarity to decay the similarity so that our method will not suffer from the knowledge dilution problem.

\begin{figure*}[t]
  \centering
  \includegraphics[width=0.88\textwidth]{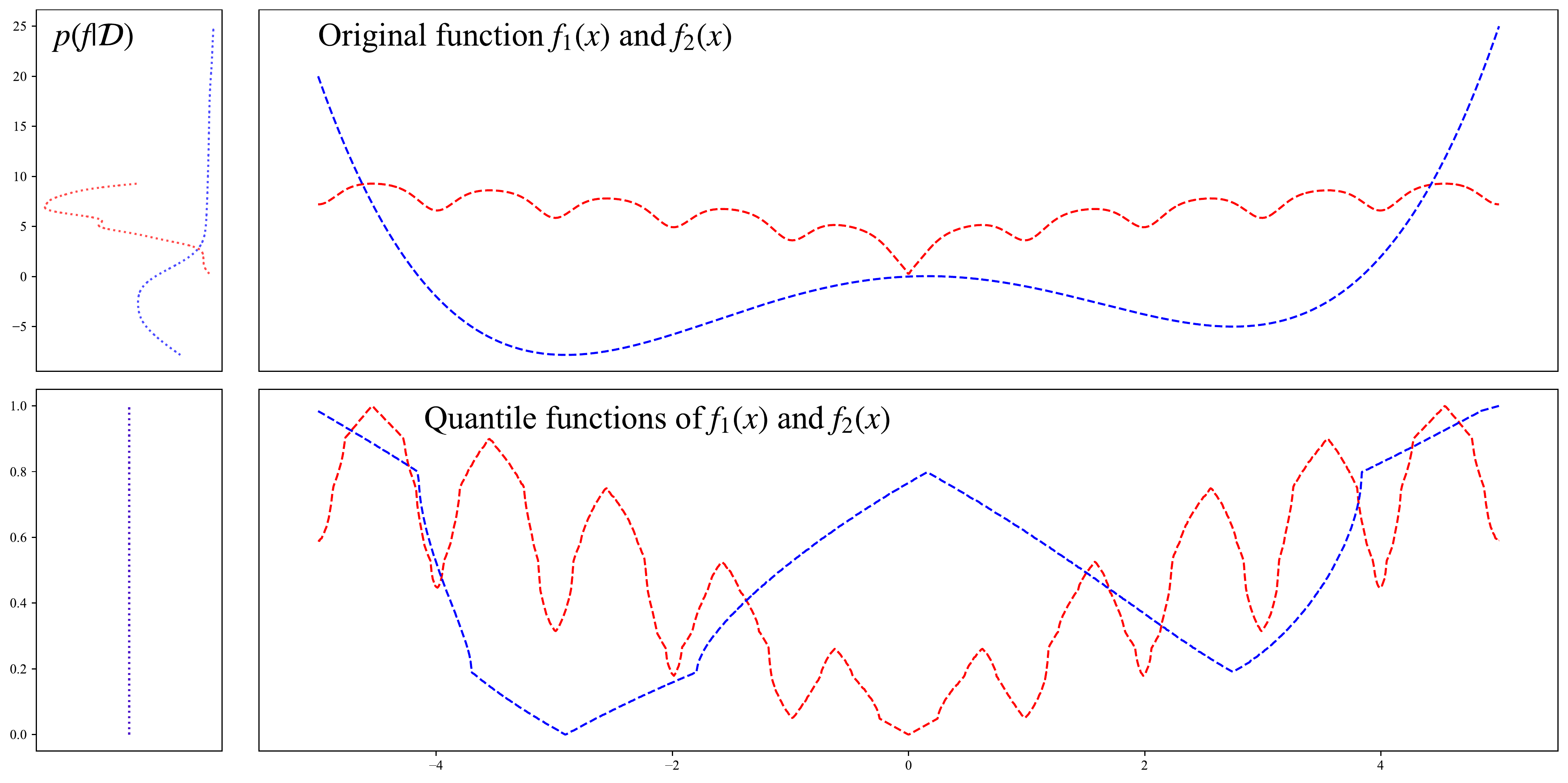}
  \caption{
    Examples of the quantile function.
    When we transform a function into a quantile function,
    we will get the uniform PDF.
    \textbf{Top row}:
    the shapes of the original functions $f_1(x), f_2(x)$ (\textbf{Right})
    and the PDFs of each function (\textbf{Left}).
    \textbf{Bottom row}:
    the shapes of the quantile functions of $f_1(x), f_2(x)$ (\textbf{Right})
    and the PDFs of each quantile function (\textbf{Left}).
    After the quantile transformation, 
    the target PDFs of each function perfectly match.
  }
  \label{appx:detail-of-acq-fn:fig:identical-target-dist}
\end{figure*}

\subsection{Task Similarity}
\label{appx:details-of-task-similarity:section}
Since the $\gamma$-set similarity is intractable due to the 
unavailability of the true $\gamma$-set $\Xg$,
we describe how to estimate the $\gamma$-set similarity.
First, we define the measure space $(\Omega, \mathcal{F}, \mu_{\Omega})$ such that
$\Omega \coloneqq \prod_{k=1}^\infty \X$, $\mathcal{F}$ is the minimum
$\sigma$-algebra on the cylinder set of $\Bx$, and
$\mu_\Omega$ is the Lebesgue measure on $\Omega$
and define the Borel body over $\X$ as $\B_{\X} \coloneqq \B_D \cap \X$.
Then, we prove the following theorem:
\begin{theorem}
  Given a measure space $(\Omega, \mathcal{F}_{\Omega}, \mu_{\Omega})$ for convergence,
  a measure space $(\X, \B_{\X}, \mu)$ for the probability measure $\prob$,
  and a strongly-consistent estimator
  of the $\gamma$-set $\mathrm{PDF}$
  $p\mathrm{:}~ \X \rightarrow \Rgeq$,
  then the following approximation
  almost surely converges to the $\gamma$-set similarity:
  \begin{equation}
  \begin{aligned}
    \hat{s}(\Dl_i, \Dl_j) \coloneqq \frac{1 - \dtv(p_i, p_j)}{1 + \dtv(p_i, p_j)}
    \stackrel{a.s.}{\rightarrow} s(\Xg_i, \Xg_j)
  \end{aligned}
  \end{equation}
  where $\stackrel{a.s.}{\rightarrow}$ implies almost sure convergence
  with respect to the measure space $(\Omega, \mathcal{F}_{\Omega}, \mu_{\Omega})$
  and $\dtv$ is the total variation distance:
  \begin{equation}
  \begin{aligned}
    \dtv(p_i, p_j) \coloneqq \frac{1}{2} \int_{\xv \in \X} |p(\xv | \Dl_i) - p(\xv|\Dl_j)| \mu(d\xv).
  \end{aligned}
  \label{appx:methods:eq:total-variation}
  \end{equation}
  \label{main:methods:theorem:convergence-of-gamma-similarity}
\end{theorem}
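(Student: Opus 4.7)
\medskip

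\noindent\textbf{Proof proposal.} The plan is to reduce the statement to two facts: (i) the equality of the ratio $(1-\dtv)/(1+\dtv)$ with the intersection-over-union whenever both densities are uniform on their respective $\gamma$-sets, and (ii) the almost-sure stability of the total variation distance under the strongly-consistent KDE.

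First I would identify the target. Because $\Xg_m$ is the top-$\gamma$-quantile domain, the true $\gamma$-set PDF $p_m^\star(\xv)$ is the uniform density $\indic{\xv \in \Xg_m}/\mu(\Xg_m)$ on $\Xg_m$, and both sets share the common measure $V_\gamma \coloneqq \gamma\mu(\X)$. A direct computation then yields
\begin{equation*}
  \dtv(p_i^\star, p_j^\star)
  = \frac{1}{2V_\gamma}\int_{\X} \bigl|\indic{\xv\in\Xg_i} - \indic{\xv\in\Xg_j}\bigr|\,\mu(d\xv)
  = 1 - \frac{\mu(\Xg_i \cap \Xg_j)}{V_\gamma},
\end{equation*}
so using $\mu(\Xg_i \cup \Xg_j) = 2V_\gamma - \mu(\Xg_i \cap \Xg_j)$ gives the algebraic identity
\begin{equation*}
  \frac{1 - \dtv(p_i^\star, p_j^\star)}{1 + \dtv(p_i^\star, p_j^\star)}
  = \frac{\mu(\Xg_i \cap \Xg_j)}{\mu(\Xg_i \cup \Xg_j)}
  = s(\Xg_i, \Xg_j).
\end{equation*}
This reduces the theorem to showing that $\dtv(p_i, p_j) \stackrel{a.s.}{\to} \dtv(p_i^\star, p_j^\star)$.

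For the limiting step I would invoke strong consistency of the estimator on each task, which (combined with Scheffé's lemma for densities) yields $\dtv(p_m, p_m^\star) \stackrel{a.s.}{\to} 0$ on the product measure space $(\Omega, \mathcal{F}_\Omega, \mu_\Omega)$ for each $m \in \{i,j\}$. A countable union of null sets is null, so both convergences hold simultaneously off a common $\mu_\Omega$-null set. On that good event, the triangle inequality for $\dtv$ gives
\begin{equation*}
  \bigl|\dtv(p_i, p_j) - \dtv(p_i^\star, p_j^\star)\bigr|
  \le \dtv(p_i, p_i^\star) + \dtv(p_j, p_j^\star) \to 0,
\end{equation*}
so $\dtv(p_i,p_j) \stackrel{a.s.}{\to} \dtv(p_i^\star, p_j^\star)$. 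Since $x \mapsto (1-x)/(1+x)$ is continuous on $[0,1]$, the continuous-mapping theorem (applied pathwise on the good event) transfers this to almost-sure convergence of $\hat{s}(\Dl_i, \Dl_j)$ to $s(\Xg_i, \Xg_j)$.

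The main obstacle is the cleanest interpretation of the phrase \emph{strongly-consistent estimator of the $\gamma$-set PDF}: convergence of densities may be stated pointwise almost everywhere, and one must invoke Scheffé's lemma to upgrade this to $L^1$ (equivalently TV) convergence, verifying that the integrability hypotheses hold because both $p_m$ and $p_m^\star$ are probability densities on the bounded domain $\X$. A secondary technical point is keeping the two sources of randomness (the realized $\Dl_i$ and $\Dl_j$ sequences) coherent on the single product space $(\Omega,\mathcal{F}_\Omega,\mu_\Omega)$, which is handled by the countable-union-of-null-sets argument above. Everything else is a routine chain of triangle-inequality and continuity arguments.
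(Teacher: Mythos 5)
Your proposal is correct and shares the paper's overall skeleton (an exact algebraic identity for the true uniform $\gamma$-set densities, followed by an almost-sure convergence argument and the continuous mapping theorem applied to $x \mapsto (1-x)/(1+x)$), but the middle step is handled by a genuinely different route. The paper first proves the identity via the representation $s(\Xg_i,\Xg_j) = \int \min(\indic{\xv\in\Xg_i},\indic{\xv\in\Xg_j})\mu(d\xv) / \int \max(\cdot)\mu(d\xv)$, which is equivalent to your direct computation of $\dtv(p_i^\star,p_j^\star) = 1 - \mu(\Xg_i\cap\Xg_j)/V_\gamma$; both versions silently use $\mu(\Xg_i)=\mu(\Xg_j)=\gamma\mu(\X)$. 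For the convergence, however, the paper argues pointwise: it shows $|p(\xv|\Dl_{i,N}) - p(\xv|\Dl_{j,N})|$ converges almost surely at continuity points of the indicators, and then interchanges the limit with the integral by invoking \emph{uniform} convergence (Theorem 7.16 of Rudin), which forces it to introduce an additional uniform-sup-norm assumption on the KDE (the paper's ``Assumption $(\star)$''). You instead establish $\dtv(p_m, p_m^\star)\stackrel{a.s.}{\to}0$ for each task via Scheff\'e's lemma (pointwise a.e.\ convergence of probability densities upgrades to $L^1$ convergence for free, since all integrals equal one) and then transfer to $\dtv(p_i,p_j)$ by the reverse triangle inequality in the total variation metric. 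This is cleaner: it dispenses with the uniform-convergence hypothesis entirely and replaces the limit--integral interchange with a two-line metric argument, at the modest cost of having to note that the boundary of each $\Xg_m$ is a $\mu$-null set (guaranteed by the paper's Riemann-integrability assumption on $\indic{\xv\in\Xg}$) so that pointwise convergence holds $\mu$-a.e. Your handling of the common null set across the two tasks is also the right way to make the simultaneous almost-sure statement precise.
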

The proof is provided in Appendix~\ref{appx:proof-of-similarity-convergence:section}.

\section{Proofs}
\label{appx:proofs:sec}
\subsection{Assumptions}
\label{appx:assumptions:section}
In this paper, we assume the following:
\begin{enumerate}
  \item Objective function $f: \X \rightarrow \mathbb{R}$
  is Lebesgue integrable and measurable defined over
  the compact convex measurable subset $\X \subseteq \mathbb{R}^D$,
  \item The probability measure $\indic{\xv \in \Xg}$
  is Riemann integrable over $\X$,
  \item The PDF of the probability measure $\indic{\xv \in \Xg}$
  always exists, and
  \item The target PDFs are identical, i.e.
  $p(f|t_i) = p(f|t_j)$ for all task pairs,
  while the conditional PDFs can be different, i.e.
  $p(\xv|f, t_i) \neq p(\xv |f,t_j)$ where $t_i$ for $i \in \{1,\dots,T\}$
  is the symbol for
  the $i$-th task, respectively.
\end{enumerate}
Strictly speaking, we cannot guarantee that the PDF
of the probability measure $\indic{\xv \in \Xg}$ always exists;
however, we formally assume that the PDF exists by considering
the formal derivative of the step function as the Dirac delta function.
The fourth assumption is known as \textbf{conditional shift}~\citeappx{zhang2013domain}.
Note that a categorical parameter with $K$ categories is handled as $\X_i = [1, K]$
as in the TPE implementation~\citeappx{bergstra2011algorithms}
as we do not require the continuity of $f$
with respect to hyperparameters in our theoretical analysis,
this definition is valid
as long as the employed kernel
for categorical parameters 
treats different categories to be equally similar
such as Aitchison kernel~\citeappx{aitchison1976multivariate}.
In this definition, $x, x^\prime \in \X_i$
are viewed as equivalent
as long as $\lfloor x \rfloor  = \lfloor x^\prime \rfloor$
and it leads to the random sampling of each category being uniform and
the Lebesgue measure of $\X$ to be non-zero.

\subsection{Preliminaries}
\label{appx:preliminaries:section}
In this paper, we consistently use the Lebesgue integral $\int_{\xv \in \X} g(\xv)\mu(d\xv)$
instead of the Riemann integral $\int_{\xv \in \X} g(\xv)d\xv$;
however, $\int_{\xv \in \X} g(\xv)d\xv = \int_{\xv \in \X} g(\xv)\mu(d\xv)$
holds if $g(\xv)$ is Riemann integrable.
The only reason why we use the Lebesgue integral is that
some functions in our discussion cannot be handled by the Riemann integral.
Hence, we encourage readers to replace $\mu(d\xv)$ with $d\xv$
if they are not familiar to the Lebesgue integral.

Since all definitions can easily be expanded
to the MO case,
we discuss the single objective case for simplicity
and mention how to expand to the MO case in Appendix~\ref{appx:generalization-tpe:section}.
Throughout this paper, we use the following terms for the discussion later
(see Figure~\ref{main:background:fig:similarity-conceptual} to get the insight):
\begin{definition}[$\gamma$-quantile value]
  Given a quantile $\gamma \in (0, 1]$
  and a measurable function $f\mathrm{:}~ \X \rightarrow \mathbb{R}$,
  $\gamma$-quantile value $f^\gamma \in \mathbb{R}$
  is a real number such that$\mathrm{:}$
  \begin{equation}
    \begin{aligned}
      f^\gamma \coloneqq \inf\biggl\{
        f^\star \in \mathbb{R} ~\biggl|
      \int_{\xv \in \X} \indic{f(\xv) \leq f^\star}
      \frac{\mu(d\xv)}{\mu(\X)} \geq \gamma
      \biggr\}.
    \end{aligned}
  \end{equation}
\end{definition}
In this paper, if $\X$ is a countable set,
we only consider discrete dimensions
such that $\{\frac{k}{n}\}_{k=0}^{n}$ ($n$ can be infinity).
For example, when $D = 1$,
we define $I_{k,n} \coloneqq [\frac{k}{n}, \frac{k+1}{n})$
and $f(x) \coloneqq \sum_{k=0}^{n-1} \indic{x \in I_{k,n}} f(\frac{k}{n})$
(if $n \rightarrow \infty$, we will take the limit of the RHS).
Then we can relax a discrete function to a continuous function,
and thus we obtain the $\gamma$-quantile value as defined above.
\begin{definition}[$\gamma$-set]
  Given a quantile $\gamma \in (0, 1]$,
  $\gamma$-set $\Xg$ is defined as
  $\Xg \coloneqq \{\xv \in \X \mid f(\xv) \leq f^\gamma\} \in \Bx$
  where $\Bx \coloneqq \B_D \cap \X$ is the Borel body over $\X$.
  \label{main:background:def:gamma-set}
\end{definition}

\begin{definition}[$\gamma$-set similarity]
  Given two objective functions $f_1, f_2$
  and $\gamma \in (0, 1]$,
  let the $\gamma$-sets for $f_1, f_2$ be
  $\Xg_1, \Xg_2$.
  Then we define $\gamma$-set similarity
  $s: \Bx \times \Bx \rightarrow [0, 1]$
  between $f_1$ and $f_2$ as follows$\mathrm{:}$
  \begin{equation}
    \begin{aligned}
      s(\Xg_1, \Xg_2) = \frac{\mu(\Xg_1 \cap \Xg_2)}{\mu(\Xg_1 \cup \Xg_2)}.
    \end{aligned}
  \end{equation}
\end{definition}
Note that those definitions rely on some assumptions described in Appendix~\ref{appx:assumptions:section}.
\subsection{Generalization of Tree-Structured Parzen Estimator with Multi-Objective Optimization}
\label{appx:generalization-tpe:section}
As mentioned in the main paper,
the extension of TPE to MO settings can be easily
achieved using a rank metric $R$.
For this reason, we stick to the single objective
setting for simplicity;
however, we describe the theoretical background
for the extension here.
We first reformulate PI
(as mentioned previously the AF of TPE is equivalent to PI~\cite{watanabe2022ctpe,watanabe2023ctpe,song2022general})
using the Lebesgue integral:
\begin{equation}
\begin{aligned}
  &\mathrm{Riemann~formulation}\!: \\
  &~~~~~~~~~~\prob(f \leq f^\gamma | \xv, \D) = \int_{-\infty}^{f^\gamma} p(f|\xv,\D) df \\
  &\mathrm{Lebesgue~formulation}\!: \\
  &~~~~~~~~~~\prob(f \leq  f^\gamma | \xv, \D) = \int_{\Y} p(f|\xv,\D) \indic{f \in \Y^\gamma} \mu_{\Y}(df)
\end{aligned}
\end{equation}
where $\Y^\gamma \in \B_{\Y}$ is defined as
the Borel body $\B_{\Y}$ over the objective space $\Y$
and $\mu_{\Y}$ is the Lebesgue measure defined on $\Y$, and
$\Y^\gamma$ is defined so that:
\begin{equation}
\begin{aligned}
  \gamma \coloneqq 
  \frac{
    \int_{\Y} p(f|\xv,\D) \indic{f \in \Y^\gamma} \mu_{\Y}(df)
  }{
    \int_{\Y} p(f|\xv,\D) \mu_{\Y}(df)
  }
\end{aligned}
\end{equation}
holds.
For example, $\Y^\gamma = (-\infty, f^\gamma]$ in the 1D example
and $\Y^\gamma = \{ f \in \Y \mid R(f) \leq R(f^\gamma) \}$
in MO cases.
In MO cases, only the Lebesgue formulation could be strictly
defined and we compute PI as follows:
\begin{equation}
\begin{aligned}
  \prob(\fv \preceq  \fv^\gamma | \xv, \D)
  &\coloneqq
  \prob(R(\fv) \leq  R(\fv^\gamma) | \xv, \D) \\
  &=
  \int_{\Y} p(\fv|\xv,\D) \indic{\fv \in \Y^\gamma} \mu_{\Y}(d\fv)
\end{aligned}
\end{equation}
where $\preceq$ is an operator
such that $\fv$ is equally good
or better than $\fv^\gamma$
based on a rank metric $R: \mathbb{R}^m \rightarrow \mathbb{R}$
for an objective vector $\fv$.
For example,
in the single objective case,
$\Y^\gamma$ is just $(-\infty, f^\gamma]$
and $R(f) = f$.
In MO cases,
we used the non-dominated rank
and the crowding distance as $R$.
In principle, MO settings fall back to the single objective setting
after we apply $R$ to an objective vector $\fv$.

\subsection{Proof of Theorem~\ref{main:methods:theorem:convergence-of-gamma-similarity}}
\label{appx:proof-of-similarity-convergence:section}
Throughout this section, we denote $\Dl_{m, N}$ as the $\Dl_m$ with the size of $N$
for notational clarity.
Then we first prove the following lemma:
\begin{lemma}
  Given the a measurable space $(\X, \Bx, \mu)$, the $\gamma$-set similarity
  between two functions $f_i, f_j$ is computed as$\mathrm{:}$
  \begin{equation}
    \begin{aligned}
      s(\Xg_i, \Xg_j) = \frac{
        1 - \dtv(p_i, p_j)
      }{
        1 + \dtv(p_i, p_j)
      }.
    \end{aligned}
  \end{equation}
  where
  $P_i = \mathbb{P}[\xv \in \Xg_i], P_j = \mathbb{P}[\xv \in \Xg_j]$
  are the probability measure of $p_i = p(\xv|\Xg_i), p_j = p(\xv|\Xg_j)$
  and
  \begin{equation}
    \begin{aligned}
      \dtv(p_i, p_j) &= \frac{1}{2}\|P_i - P_j \|_1 \\
      &= \frac{1}{2}\int_{\xv \in \X} | p(\xv | \Xg_i) - p(\xv | \Xg_j) |\mu(d\xv)
    \end{aligned}
    \label{appx:proofs:eq:total-variation}
  \end{equation}
  is the total variation distance.
  \label{appx:proofs:lemma:similarity-and-total-variation}
\end{lemma}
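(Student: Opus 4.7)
My plan is to compute both sides of the claimed identity directly, using the fact that conditioning on the $\gamma$-set produces a uniform PDF supported on that set, and then verifying the algebraic equivalence.

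First I would write down the conditional PDFs explicitly. By the third assumption in Appendix~\ref{appx:assumptions:section}, the PDF of the probability measure $\indic{\xv \in \Xg_m}$ exists, and since the probability measure $\prob[\xv \in \Xg_m]$ is by construction the uniform measure restricted to $\Xg_m$, we have $p(\xv | \Xg_m) = \indic{\xv \in \Xg_m} / \mu(\Xg_m)$ for $m \in \{i, j\}$. Moreover, by the definition of the $\gamma$-quantile value and the assumption that the relevant indicator function is Riemann integrable, we obtain $\mu(\Xg_i) = \mu(\Xg_j) = \gamma \mu(\X)$; call this common value $a$, and set $c \coloneqq \mu(\Xg_i \cap \Xg_j)$.

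Next I would evaluate $\dtv(p_i, p_j)$ by partitioning $\X$ into the four disjoint Borel pieces $\Xg_i \setminus \Xg_j$, $\Xg_j \setminus \Xg_i$, $\Xg_i \cap \Xg_j$, and $\X \setminus (\Xg_i \cup \Xg_j)$. On the intersection the two PDFs agree and contribute $0$; outside the union both vanish; and on the two symmetric-difference pieces the integrand equals $1/a$ on a set of measure $a - c$ each. Summing and applying the $1/2$ factor from Eq.~(\ref{appx:proofs:eq:total-variation}) yields
\begin{equation}
\begin{aligned}
  \dtv(p_i, p_j) = \frac{1}{2}\biggl( \frac{a - c}{a} + \frac{a - c}{a} \biggr) = \frac{a - c}{a}.
\end{aligned}
\end{equation}

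Finally I would substitute this into the right-hand side of the claim and simplify:
\begin{equation}
\begin{aligned}
  \frac{1 - \dtv(p_i, p_j)}{1 + \dtv(p_i, p_j)}
  = \frac{1 - (a - c)/a}{1 + (a - c)/a}
  = \frac{c}{2a - c}
  = \frac{\mu(\Xg_i \cap \Xg_j)}{\mu(\Xg_i \cup \Xg_j)},
\end{aligned}
\end{equation}
where the last equality uses inclusion-exclusion $\mu(\Xg_i \cup \Xg_j) = \mu(\Xg_i) + \mu(\Xg_j) - \mu(\Xg_i \cap \Xg_j) = 2a - c$. The right-hand side is exactly $s(\Xg_i, \Xg_j)$ by definition, completing the proof.

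The only subtle step is the claim that $\mu(\Xg_i) = \mu(\Xg_j)$; this is where the proof leans on the $\gamma$-quantile being defined via a fixed fraction $\gamma$ of the Lebesgue measure of $\X$ rather than via the objective PDFs themselves, together with the Riemann integrability hypothesis ensuring no pathological boundary contributes a nonzero defect. Once that equality is in hand, everything else is straightforward region-by-region integration and algebra.
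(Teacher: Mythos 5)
Your proof is correct, and it reaches the same destination as the paper's by a mildly different route. The paper proves the identity by writing $\mu(\Xg_i \cap \Xg_j) = \int_{\xv\in\X} \indic{\xv \in \Xg_i} \wedge \indic{\xv \in \Xg_j}\,\mu(d\xv)$ and $\mu(\Xg_i \cup \Xg_j) = \int_{\xv\in\X} \indic{\xv \in \Xg_i} \vee \indic{\xv \in \Xg_j}\,\mu(d\xv)$ and then invoking the general (Scheff\'e-type) identities $\int \min(p_i,p_j)\,d\mu = 1-\dtv(p_i,p_j)$ and $\int \max(p_i,p_j)\,d\mu = 1+\dtv(p_i,p_j)$; you instead compute $\dtv(p_i,p_j) = (a-c)/a$ explicitly by partitioning $\X$ into the four regions and then invert the algebra. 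The two arguments carry the same mathematical content, but yours has a genuine virtue: it makes explicit the hypothesis $\mu(\Xg_i)=\mu(\Xg_j)$, which the paper's final step also needs (the min/max identities apply to the \emph{normalized} densities, so a common normalizing constant must factor out of both integrals) but never states. One caveat on your "subtle step": the definition of $f^\gamma$ as an infimum only guarantees $\mu(\Xg_m) \geq \gamma\,\mu(\X)$, with equality failing if $f$ has a plateau of positive measure at level $f^\gamma$; Riemann integrability of the indicator does not by itself rule this out. This is a gap you share with the paper rather than one you introduce, and the weaker statement that actually matters --- $\mu(\Xg_i)=\mu(\Xg_j)$ --- is what should be assumed or verified; for unequal measures the claimed identity is genuinely false (e.g.\ nested sets give $s = a_i/a_j$ but $(1-\dtv)/(1+\dtv) = a_i/(2a_j - a_i)$).
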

Note that we used
the Scheffe's lemma~\citeappx{tsybakovintroduction}
for the transformation in Eq.~(\ref{appx:proofs:eq:total-variation}).
\begin{proof}
  Since $\xv$ is either in or not in $\Xg$
  and $\Xg$ is fixed,
  the probability measure takes either $0$ or $1$.
  For this reason, the following holds$\mathrm{:}$
  \begin{equation}
    \begin{aligned}
      \indic{\xv \in \Xg_i \cap \Xg_j} &= \mathbb{P}[\xv \in \Xg_i \cap \Xg_j] \\
      & = \mathbb{P}[\xv \in \Xg_i] \wedge \mathbb{P}[\xv \in \Xg_j], \\
      \indic{\xv \in \Xg_i \cup \Xg_j} &= \mathbb{P}[\xv \in \Xg_i \cup \Xg_j] \\
      & = \mathbb{P}[\xv \in \Xg_i] \vee \mathbb{P}[\xv \in \Xg_j]
    \end{aligned}
    \label{appx:proof:eq:prob-measure-and-operators}
  \end{equation}
  where $\wedge, \vee$ are equivalent to the $\min$ and $\max$ operator, respectively.
  Notice that $\Xg_1, \Xg_2$ must be measurable sets so that the equation above is valid.
  It leads to the following equations$\mathrm{:}$
  \begin{equation}
    \begin{aligned}
      \mu(\Xg_i \cap \Xg_j) & = \int_{\xv \in \X} \mathbb{P}[\xv \in \Xg_i \cap \Xg_j] \mu(d\xv), \\
      \mu(\Xg_i \cup \Xg_j) & = \int_{\xv \in \X} \mathbb{P}[\xv \in \Xg_i \cup \Xg_j] \mu(d\xv). \\
    \end{aligned}
    \label{appx:proof:eq:volume-vs-prob-measure}
  \end{equation}
  Using 
  $\mathrm{Eqs.~(\ref{appx:proof:eq:prob-measure-and-operators}), (\ref{appx:proof:eq:volume-vs-prob-measure})}$,
  we obtain the following equation$\mathrm{:}$
  \begin{equation}
    \begin{aligned}
      s(\Xg_i, \Xg_j) &= \frac{
        \int_{\xv \in \X} \mathbb{P}[\xv \in \Xg_i] \wedge \mathbb{P}[\xv \in \Xg_j] \mu(d\xv)
      }{
        \int_{\xv \in \X} \mathbb{P}[\xv \in \Xg_i] \vee \mathbb{P}[\xv \in \Xg_j] \mu(d\xv)
      } \\
      &= \frac{
        1 - \dtv(p_i, p_j)
      }{
        1 + \dtv(p_i, p_j)
      }.
    \end{aligned}
    \label{appx:proofs:eq:true-task-similarity}
  \end{equation}
  This completes the proof.
  \label{appx:proofs:proof:similarity-is-total-variation-ratio}
\end{proof}

Then we need to prove the following lemma as well:
\begin{lemma}
  Given the assumptions above and the measurable spaces defined in the statement of $\mathrm{Theorem}~\ref{main:methods:theorem:convergence-of-gamma-similarity}$,
  the following holds under the assumption (denote $\mathrm{Assumption~(\star)}$) of
  $\exists \epsilon > 0,
    \lim_{N\rightarrow \infty}
    \|(p(\xv|\Dl_{m,N}) - \indic{\xv \in \Xg_m} / \mu(\Xg_m))
    \indic{\xv \in \epsilon \boldsymbol{B}_N}
    \|_{\infty}
    = 0$
  for all $m$
  where $\boldsymbol{B}_N \subseteq \mathbb{R}^D$
  is a ball with radius $N$ centered at $\boldsymbol{0}\mathrm{:}$
  \begin{equation}
    \begin{aligned}
      &\int_{\xv \in \X} | p(\xv | \Dl_{i,N}) - p(\xv | \Dl_{j,N}) | \mu(d\xv) \\
      &\stackrel{a.s.}{\rightarrow}
      \int_{\xv \in \X} \biggl|
      \frac{\indic{\xv \in \Xg_i}}{\mu(\Xg_i)} -
      \frac{\indic{\xv \in \Xg_j}}{\mu(\Xg_j)}
      \biggr| \mu(d\xv) ~ (N \rightarrow \infty).
    \end{aligned}
    \label{appx:proofs:eq:convergence-of-total-variation}
  \end{equation}
  \label{appx:proofs:lemma:convergence-of-total-variation}
\end{lemma}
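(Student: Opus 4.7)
The plan is to reduce the statement to the $L^1$ consistency of each individual $\mathrm{KDE}$ $p(\xv|\Dl_{m,N})$ toward its limiting uniform density on $\Xg_m$, and then to lift that consistency from the sup-norm assumption $(\star)$ using the compactness of $\X$. Writing $q_m(\xv) \coloneqq \indic{\xv \in \Xg_m}/\mu(\Xg_m)$ and $\|\cdot\|_1$ for the $L^1(\X,\mu)$-norm, the reverse triangle inequality for $\|\cdot\|_1$ gives, pointwise in $\omega \in \Omega$,
\[
\Bigl|\, \|p(\cdot|\Dl_{i,N}) - p(\cdot|\Dl_{j,N})\|_1 - \|q_i - q_j\|_1 \,\Bigr|
\leq \|p(\cdot|\Dl_{i,N}) - q_i\|_1 + \|p(\cdot|\Dl_{j,N}) - q_j\|_1.
\]
So it suffices to show that each term on the right-hand side vanishes almost surely as $N \to \infty$.

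For that step, I would use that $\X \subseteq \mathbb{R}^D$ is compact (Assumption 1), hence bounded, so the fixed $\epsilon > 0$ in Assumption~$(\star)$ admits some $N_0$ with $\X \subseteq \epsilon \boldsymbol{B}_N$ for all $N \geq N_0$. On this regime $\indic{\xv \in \epsilon \boldsymbol{B}_N} = 1$ for every $\xv \in \X$, so $(\star)$ becomes
\[
\sup_{\xv \in \X} \bigl| p(\xv|\Dl_{m,N}) - q_m(\xv) \bigr| \stackrel{a.s.}{\longrightarrow} 0.
\]
Since $\mu(\X) < \infty$ by compactness, the $L^1$-norm is dominated by $\mu(\X)$ times the sup-norm, which yields $\|p(\cdot|\Dl_{m,N}) - q_m\|_1 \stackrel{a.s.}{\to} 0$ for each $m$. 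Intersecting the two full-$\mu_\Omega$-measure events corresponding to $m \in \{i,j\}$ preserves full measure, and on that intersection the reverse-triangle bound gives Eq.~(\ref{appx:proofs:eq:convergence-of-total-variation}).

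The main obstacle I anticipate is not analytic but measure-theoretic: one must justify that the event on which the sup-norm convergence holds lies in $\mathcal{F}_\Omega$, i.e., that $\sup_{\xv\in \X} | p(\xv|\Dl_{m,N}) - q_m(\xv)|$ is itself a measurable function on $(\Omega,\mathcal{F}_\Omega)$. This is handled by observing that for each fixed $\xv$ the $\mathrm{KDE}$ $p(\xv|\Dl_{m,N})$ is a measurable function of the first $N$ coordinates of $\omega$, and that for the kernels in use (Gaussian on continuous dimensions, Aitchison on categorical ones) $p(\cdot|\Dl_{m,N})$ is continuous on the compact $\X$; therefore the supremum can be replaced by a countable supremum over a dense subset of $\X$, preserving measurability. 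Once that bookkeeping is in place, the argument is simply the continuity of the $L^1$-norm under $L^1$-convergence applied coordinate-wise in the pair of tasks.
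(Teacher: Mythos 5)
Your proof is correct and takes essentially the same route as the paper's: both arguments use Assumption~$(\star)$ together with the compactness of $\X$ (so that $\X \subseteq \epsilon\boldsymbol{B}_N$ eventually) to obtain almost-sure uniform convergence of each KDE to $\indic{\xv \in \Xg_m}/\mu(\Xg_m)$ on all of $\X$, and then exploit the finiteness of $\mu(\X)$ to pass to the limit inside the integral. The only cosmetic difference is that you package the last step as a reverse triangle inequality in $L^1$ combined with $\|\cdot\|_1 \leq \mu(\X)\,\|\cdot\|_\infty$, whereas the paper applies the continuous mapping theorem to $|p(\cdot|\Dl_{i,N}) - p(\cdot|\Dl_{j,N})|$ and then interchanges limit and integral via uniform convergence (Rudin, Theorem~7.16); your measurability remark is a welcome piece of bookkeeping that the paper leaves implicit.
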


\begin{proof}
  From the assumption of the strongly-consistent estimator,
  the following uniform convergence holds~\citeappx{wied2012consistency}
  for all $m \in \{1,\dots,T\}$ and $\forall \epsilon > 0$
  at each point of continuity $\xv$ of $\indic{\xv \in \Xg_m}\mathrm{:}$
  \begin{equation}
    \begin{aligned}
      p(\xv | \Dl_{m,N})
      \stackrel{a.s.}{\rightarrow}
      \frac{\indic{\xv \in \Xg_m}}{\mu(\Xg_m)}~(N \rightarrow \infty).
    \end{aligned}
  \end{equation}
  Note that the uniform convergence almost surely happens
  with respect to an arbitrary sample from $\Omega$ that achieves the strong consistency of $\mathrm{KDE}$.
  Therefore, the following holds using the continuous mapping theorem$\mathrm{:}$
  \begin{equation}
    \begin{aligned}
      & | p(\xv | \Dl_{i,N}) - p(\xv | \Dl_{j,N}) | \\
      & \stackrel{a.s.}{\rightarrow}
      \biggl|
      \frac{\indic{\xv \in \Xg_i}}{\mu(\Xg_i)} -
      \frac{\indic{\xv \in \Xg_j}}{\mu(\Xg_j)}
      \biggr|
      ~(N \rightarrow \infty).
    \end{aligned}
  \end{equation}
  From the assumptions, all functions are defined
  over the compact convex measurable subset $\X \subseteq \mathbb{R}^D$
  and all convergences in this proof are uniform
  and almost sure;
  therefore, we obtain the following result based on
  $\mathrm{Theorem~7.16}$ of \citeappx{rudin1976principles}$\mathrm{:}$
  \begin{equation}
    \begin{aligned}
      &\lim_{N\rightarrow \infty}\int_{\xv \in \X} |p(\xv | \Dl_{i,N}) - p(\xv | \Dl_{j,N})|
      \mu(d\xv) \\
       & =\int_{\xv \in \X} \lim_{N\rightarrow \infty} |p(\xv | \Dl_{i,N}) - p(\xv | \Dl_{j,N})|
      \mu(d\xv)\\
       & \stackrel{a.s.}{\rightarrow}
      \int_{\xv \in \X}\biggl|
      \frac{\indic{\xv \in \Xg_i}}{\mu(\Xg_i)} -
      \frac{\indic{\xv \in \Xg_j}}{\mu(\Xg_j)}
      \biggr| \mu(d\xv).
    \end{aligned}
  \end{equation}
  Note that the last transformation requires $\mathrm{Assumption~(\star)}$
  to be true with the probability of $1$.
  This completes the proof.
\end{proof}

From Eq.~(\ref{appx:proofs:eq:true-task-similarity}) in
Lemma~\ref{appx:proofs:lemma:similarity-and-total-variation},
Eq.~(\ref{appx:proofs:eq:convergence-of-total-variation}) in
Lemma~\ref{appx:proofs:lemma:convergence-of-total-variation}
and the continuous mapping theorem,
the statement of Theorem~\ref{main:methods:theorem:convergence-of-gamma-similarity}
is proved and this completes the proof.

\subsection{Proof of Theorem~\ref{main:methods:theorem:gamma-set-converges}}
\label{appendix:proofs:subsection:proof-of-gamma-set-convergence}
\begin{theorem}
  If we use the $\epsilon$-greedy policy ($\epsilon \in (0, 1)$) for $\mathrm{TPE}$ to choose
  the next candidate $\xv$ for a noise-free objective function $f(\xv)$
  defined on search space $\X$ with at most a countable number of configurations,
  and we use a $\mathrm{KDE}$ whose distribution converges to the empirical distribution as
  the number of samples goes to infinity,
  then a set of the top-$\gamma$-quantile observations $\Dl_1$ is almost surely
  a subset of $\Xg$.
  \label{appendix:proofs:theorem:tpe-with-eps-greedy-converges-to-gamma-set}
\end{theorem}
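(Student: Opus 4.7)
The plan is to reduce the set-inclusion claim to a quantile-count bound, and then establish the count bound by combining the exploration guarantee of the $\epsilon$-greedy rule with a self-reinforcing argument for the greedy component. Concretely, let $M_N$ denote the number of observations in $\D_1$ up to step $N$ whose objective value is at most $f^\gamma$. Since $f$ is noise-free and $\Dl_1$ is the set of $\lceil \gamma N \rceil$ observations with the smallest $f$-values, the pigeonhole statement ``$M_N \geq \lceil \gamma N \rceil$ implies $\Dl_1 \subseteq \Xg$'' holds trivially, so it suffices to show $M_N \geq \lceil \gamma N \rceil$ almost surely for all sufficiently large $N$.

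First I would use the $\epsilon$-greedy hypothesis to control the random sub-stream. Since $\X$ is at most countable and every configuration is drawn with strictly positive per-iteration probability (uniformly at rate $\epsilon/|\X|$ in the finite case, or under the fixed prior in the countably infinite case), the second Borel--Cantelli lemma forces every $\xv \in \X$ to be visited infinitely often almost surely, and the strong law of large numbers yields $M_N^{\mathrm{rand}}/N \to \epsilon \mu(\Xg)/\mu(\X) \geq \epsilon \gamma$. Second, I would argue that the greedy sub-stream contributes the remaining mass needed to push $M_N/N$ above $\gamma$: by the assumption that the KDE converges to the empirical distribution of $\Dl_1$, the density ratio $p(\xv|\Dl_1)/p(\xv|\Dg_1)$ is asymptotically concentrated on configurations already present in $\Dl_1$, and by construction those configurations have $f$-values at most the current empirical $\gamma$-quantile of $\D_1$. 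A bootstrapping induction on $N$ then upgrades this to the statement that $M_N/N$ is eventually at least $\mu(\Xg)/\mu(\X) \geq \gamma$, which closes the argument.

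The main obstacle I expect is precisely this bootstrapping step: one has to rule out a transient phase in which the greedy rule locks onto spurious $f > f^\gamma$ points before the KDE has had time to concentrate. The cleanest route is probably to show that the random sub-stream alone refreshes $\Dl_1$ with new $\Xg$-observations at a positive rate, so that the empirical $\gamma$-quantile of $\D_1$ is upper-bounded by $f^\gamma$ in the limit, which in turn forces any greedy draw to concentrate on $\Xg$ once the KDE has stabilized. A secondary subtlety is the boundary case $\mu(\Xg)/\mu(\X) = \gamma$, where the asymptotic fraction equals $\gamma$ exactly and one has to sharpen the argument to handle the $O(\sqrt{N})$ fluctuations of $M_N$ via a CLT or Chernoff tightening, rather than by the strong law alone.
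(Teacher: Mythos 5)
Your skeleton is essentially the paper's: reduce the claim to the pigeonhole bound $M_N \ge \lceil \gamma N\rceil$ on the count of observations with $f(\xv) \le f^\gamma$, let the random sub-stream of the $\epsilon$-greedy rule contribute good draws at asymptotic rate $\epsilon\,\mu(\Xg)/\mu(\X) \ge \epsilon\gamma$, and argue that the greedy sub-stream (via the degeneration of the density ratio, equivalently the PI form of the acquisition function, onto the empirical top set $\Dl_N$) supplies the rest. The genuine gap is exactly the bootstrapping step you flag, and the route you propose for closing it is circular: the statement ``the empirical $\gamma$-quantile of $\D_1$ is upper-bounded by $f^\gamma$'' is equivalent to $M_N \ge \lceil\gamma N\rceil$, i.e.\ to the conclusion you are trying to prove, and the random sub-stream alone only delivers $M_N/N \to \epsilon\,\mu(\Xg)/\mu(\X)$, which is strictly below $\gamma$ for $\epsilon < 1$ and therefore cannot by itself push the empirical quantile down to $f^\gamma$.

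The paper closes this step without ever needing the empirical quantile to sit below $f^\gamma$. It first discretizes to a finite grid $\X_{(k)}$ (passing to $k\to\infty$ for the countable case), notes that the random component almost surely covers every configuration for $N$ large, and then invokes a dichotomy: either $\set{\Dl_N}\subseteq \X_{(k)}^{\gamma'}$ for some $\gamma'\le\gamma$, in which case a greedy draw lands in $\Xg_{(k)}$ with probability $1$; or $\Xg_{(k)}\subseteq\set{\Dl_N}$, in which case a greedy draw --- which concentrates on $\set{\Dl_N}$ by the PI limits --- lands in $\Xg_{(k)}$ with probability $\gamma(k+1)^D/|\set{\Dl_N}| > \gamma$. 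In either branch the per-step probability of hitting $\Xg_{(k)}$ is at least $\epsilon\gamma + (1-\epsilon)q$ with $q$ strictly greater than $\gamma$, so the law of large numbers already yields $M_N \ge \lceil\gamma N\rceil$ eventually; this is also why your worry about the boundary case $\mu(\Xg)/\mu(\X) = \gamma$ does not arise, since the strict margin comes from the greedy branch rather than from $\mu(\Xg)/\mu(\X)$ exceeding $\gamma$. To complete your argument, replace the ``random sub-stream controls the empirical quantile'' step with this coverage-plus-dichotomy device.
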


\begin{proof}
  First, we define a function defined on at most countable set
  as $f_k(\xv) \coloneqq f(k\lfloor \xv / k \rfloor)$
  so that we can still define the $\mathrm{PDF}$ $p(\xv|f)$ on $\X$.
  In this proof, we assume $\X = [0,1]^D$ and
  we choose the next configuration from $\X_{(k)} \coloneqq \{0,1/k,\dots,(k-1)/k,1\}^D$.
  We first prove the finite case and 
  then take $k \rightarrow \infty$ to prove the countable case.
  First, we list the facts that
  are immediately drawn from the assumptions$\mathrm{:}$
  \begin{enumerate}
    \item The $\mathrm{KDE}$ with a set of control parameters $\alphav_N$
    at the $N$-th iteration converges to the empirical distribution, i.e.
    \begin{equation}
    \begin{aligned}
      \lim_{N \rightarrow \infty} \frac{1}{N} \sum_{n^\prime=1}^N k(\xv, \xv_{n^\prime} | \alphav_N)
      \rightarrow
      \lim_{N \rightarrow \infty} \frac{1}{N} \sum_{n^\prime=1}^N \delta(\xv, \xv_{n^\prime}), 
    \end{aligned}
    \end{equation}
    \item Due to the random part in the $\epsilon$-greedy policy, we almost surely covers
    all possible configurations $\xv \in \X_{(k)}$ with a sufficiently large number of samples,
    \item $\forall \xv \in \Dl_N, p(\xv|\Dl_N) > 0$,
    \item $\forall \epsilon^\prime > 0, \exists N > 0, \forall \xv \notin \Dl_N, p(\xv|\Dl_N) < \epsilon^\prime$,
    \item $\forall \xv \in \Dg_N, p(\xv|\Dg_N) > 0$, and
    \item $\forall \epsilon^\prime > 0, \exists N > 0, \forall \xv \notin \Dg_N, p(\xv|\Dg_N) < \epsilon^\prime$.
  \end{enumerate}
  Since $\forall \epsilon^\prime > 0, \exists N > 0, \Dl_N \stackrel{a.s.}{\subseteq} \X_{(k)}^{\gamma + \epsilon^\prime}$
  holds from the weak law of large numbers if $\epsilon = 1$,
  the main strategy of this proof is to show
  ``\textbf{the probability of drawing $\xv \in \Xg_{(k)}$ is higher than $\gamma$ as $N$ goes to infinity}''.
  We use the following fact in the $\mathrm{AF}$
  of the $\mathrm{TPE}$ formulation~\citeappx{watanabe2022ctpe,watanabe2023ctpe}$\mathrm{:}$
  \begin{equation}
  \begin{aligned}
    \mathrm{EI}_{f^\gamma}[\xv] \propto \mathrm{PI}_{f^\gamma}[\xv]
    = \frac{\gamma p(\xv|\Dl)}{\gamma p(\xv|\Dl) + (1 - \gamma) p(\xv | \Dg)}.
  \end{aligned}
  \end{equation}
  Since we use the $\epsilon$-greedy algorithm,
  $\lim_{N \rightarrow \infty} p(\xv | \Dl_N) \stackrel{a.s.}{\geq} \epsilon / (k + 1)^D$
  and 
  $\lim_{N \rightarrow \infty} p(\xv | \Dg_N) \stackrel{a.s.}{\geq} \epsilon / (k + 1)^D$
  hold for all $\xv \in \X_{(k)}$.
  It implies that $p(\xv | \Dl_N)$ and $p(\xv | \Dg_N)$
  dominate each other if $\xv \notin \Dg_N$ or $\xv \notin \Dl_N$,
  respectively.
  Therefore, when we take a sufficiently large $N$,
  the following holds$\mathrm{:}$
  \begin{enumerate}
    \item $\forall \epsilon^\prime, \exists N > 0, 
    \forall \xv \notin \Dg_N, |\mathrm{PI}_{f^\gamma}[\xv] - 1 | < \epsilon^\prime$, and
    \item $\forall \epsilon^\prime, \exists N > 0, 
    \forall \xv \notin \Dl_N, \mathrm{PI}_{f^\gamma}[\xv] < \epsilon^\prime$.
  \end{enumerate}
  Notice that we cannot determine $\mathrm{PI}_{f^\gamma}[\xv]$
  for $\xv \in \Dl_N \cap \Dg_N$
  and we assume that a sufficiently large $N$ allows $\mathrm{TPE}$ to
  cover all possible configurations at least once.
  For this reason, the greedy algorithm always picks a configuration
  $\xv \in \Dl_N$.
  Since $\Dl_N$ is determined based on the objective function $f_k$
  and all possible configurations are covered
  due to the random policy,
  either of the following is satisfied$\mathrm{:}$
  \begin{enumerate}
    \item $\Dl_N \subseteq \X_{(k)}^{\gamma^\prime}$ such that $\gamma^\prime \leq \gamma$, or
    \item $\X_{(k)}^\gamma \subseteq \Dl_N$.
  \end{enumerate}
  For the first case, we obtain $\xv \in \Xg_{(k)}$ with the probability of $1$
  and we obtain $\xv \in \Xg_{(k)}$ with the probability of
  $\gamma (k+1)^D /|\set{\Dl_N}| > \gamma$ for the second case.
  Since the random policy obviously picks a configuration $\xv \in \Xg_{(k)}$
  with the probability of $\gamma$,
  the probability of drawing $\xv \in \Xg_{(k)}$ is larger than
  $\epsilon \gamma + (1 - \epsilon) \min(1, \gamma (k+1)^D/|\set{\Dl_N}|) > \gamma$;
  therefore, it concludes that
  $\Dl_N \stackrel{a.s.}{\subseteq} \Xg_{(k)}$ and 
  this completes the proof for the finite case.
  The countable case is completed by $n \rightarrow \infty$.
\end{proof}

Using this lemma, we prove the theorem of interest, i.e.
Theorem~\ref{main:methods:theorem:gamma-set-converges}.
\begin{proof}
  Let $\{\D_m\}_{m=1}^T$
  with $|\D_1| = N$ be $\Dv_N$
  and the splits of $\Dv_N$
  be $\Dvl_N$ and $\Dvg_N$.
  We also define the observations of the target task with
  the size of $N$ as $\D_N$ for the simplicity.
  Notice that we assume that $N$ is sufficiently large
  to cover all possible configurations as in the proof of
  $\mathrm{Theorem~\ref{appendix:proofs:theorem:tpe-with-eps-greedy-converges-to-gamma-set}}$. 
  Due to the fact that
  the limit of the joint $\mathrm{PDF}$
  converges to the original $\mathrm{PDF}$, i.e.
  $\lim_{N \rightarrow \infty} p(\xv,t|\Dvl_N) = p(\xv | \Dl_N)$
  and $\lim_{N \rightarrow \infty} p(\xv,t|\Dvg_N) = p(\xv | \Dg_N)$,
  the following holds$\mathrm{:}$
  \begin{equation*}
  \begin{aligned}
    &\mathrm{Property}~(1).\\
    &~~~~~\exists N > 0, \forall \xv \in \Dl_N, p(\xv,t|\Dvl_N) > 0, \\
    &\mathrm{Property}~(2).\\
    &~~~~~\forall \epsilon^\prime > 0, \exists N > 0, \forall \xv \in \Dl_N \setminus \Dg_N, 
    p(\xv,t|\Dvg_N) < \epsilon^\prime, \\
    &\mathrm{Property}~(3).\\
    &~~~~~\exists N > 0, \forall \xv \in \Dg_N, p(\xv,t|\Dvg_N) > 0, \\
    &\mathrm{Property}~(4).\\
    &~~~~~\forall \epsilon^\prime > 0, \exists N > 0, \forall \xv \in \Dg_N \setminus \Dl_N, 
    p(\xv,t|\Dvl_N) < \epsilon^\prime. \\
  \end{aligned}
  \end{equation*}
  From $\mathrm{Properties~(1)}$ and $\mathrm{(2)}$,
  the $\mathrm{AF}$ converges to $1$
  if $\xv \in \Dl_N\setminus\Dg_N, N \rightarrow \infty$.
  From $\mathrm{Properties~(3)}$ and $\mathrm{(4)}$,
  the $\mathrm{AF}$ converges to 0
  if $\xv \in \Dg_N\setminus \Dl_N, N \rightarrow \infty$.  
  Furthermore,
  the $\mathrm{AF}$ may take non-zero
  if $\forall \xv \in \Dl_N \cap \Dg_N$.
  Since those conditions are the same for
  $\mathrm{Theorem~\ref{appendix:proofs:theorem:tpe-with-eps-greedy-converges-to-gamma-set}}$,
  the same discussion applies to this theorem,
  and thus this completes the proof.
\end{proof}

\subsection{Proof of Proposition \ref{main:methods:proposition:trivial-dims-not-matter}}
\label{appendix:proofs:subsection:proof-of-trivial-dims-not-matter}
We first define $\mu_{\X}$ as the the Lebesgue measure on $\X$,
$\mu_{d}$ as the Lebesgue measure on $\X_d$,
and $\mu_{-d}$ as the Lebesgue measure on $\X_{-d}$.
Then we formally define the trivial dimensions:
\begin{definition}[Trivial dimension]
  Given a quantile $\gamma \in (0, 1]$
  and a measure space $(\X_d, \B_d \cap \X_d, \mu_d)$,
  the $d$-th dimension is trivial if
  $\prob[\xv \in \Xg | \xv_{-d}] \in \{0, 1\}$
  where $\xv_{-d}$ is $\xv$ without the $d$-th dimension and
  $\X_d$ is the domain of the $d$-th dimension.
\end{definition}
The definition implies the $d$-th dimension
is trivial if and only if the binary function
$b(\xv | \Xg, \xv_{-d}) \coloneqq \indic{\xv \in \Xg | \xv_{-d}}$
 does not change
almost everywhere ($\mu_d$-a.e.) due to the variation in the $d$-th dimension
when the other dimensions are fixed
where we define $b(\xv | \Xg, \xv_{-d})$ as $b(\xv | \Xg)$ such that
each element, except for the $d$-th dimension,
of $\xv$ is fixed to $\xv_{-d}$.
Note that trivial dimension
is a stronger concept than
less important dimension in f-ANOVA.
More specifically, f-ANOVA computes the marginal variance
of each dimension
and judges important dimensions based on the variance;
however, zero variance, which implies the dimension is
not important, does not guarantee that such a dimension
is trivial.
For example, $f(x, y) = \mathrm{sign}(x) \sin y$ leads
to zero variance in both dimensions, but both dimensions
obviously change the function values.

Then we prove Proposition~\ref{main:methods:proposition:trivial-dims-not-matter}.
\begin{proof}
  From the definition of the trivial dimension,
  the following holds almost surely for $x_d \in \X_d$ when we assume the $d$-th dimension
  is trivial$\mathrm{:}$
  \begin{equation}
  \begin{aligned}
    \int_{\xv_{-d} \in \X_{-d}} b(\xv|\Xg, \xv_{-d})\mu_{-d}(d\xv_{-d})
    =
    \frac{\mu_{\X}(\Xg)}{\mu_d(\X_d)}\biggl\vert_{x_d} ~\mu_d\text{-}a.e.,
  \end{aligned}
  \end{equation}
  where $\mid_{x_d}$ means that the $d$-th dimension in $\xv$ is $x_d$.
  Additionally, as the $d$-th dimension of two tasks
  are trivial, we obtain the following for $\forall \xv_{-d} \in \X_{-d}\mathrm{:}$
  \begin{equation}
  \begin{aligned}
    b(\xv | \Xg_1 \cap \Xg_2, \xv_{-d}) &= \min(b(\xv | \Xg_1, \xv_{-d}), b(\xv | \Xg_2, \xv_{-d})),\\
    b(\xv | \Xg_1 \cup \Xg_2, \xv_{-d}) &= \max(b(\xv | \Xg_1, \xv_{-d}), b(\xv | \Xg_2, \xv_{-d})) \\
  \end{aligned}
  \end{equation}
  where $\Xg_1, \Xg_2$ are the $\gamma$-sets of tasks 1, 2.
  As both $b(\xv | \Xg_1, \xv_{-d}), b(\xv | \Xg_2, \xv_{-d})$
  are also almost everywhere $0$ or $1$ ($\mu_d$-a.e.),
  $b(\xv | \Xg_1 \cup \Xg_2, \xv_{-d}), b(\xv | \Xg_1 \cap \Xg_2, \xv_{-d})$
  are almost everywhere $0$ or $1$.
  Therefore, we obtain$\mathrm{:}$
  \begin{equation}
  \begin{aligned}
    &\int_{\xv_{-d} \in \X_{-d}} b(\xv | \Xg_1 \cap \Xg_2, \xv_{-d})\mu_{-d}(d\xv_{-d}) \\
    &~~~~~=
    \frac{\mu_{\X}(\Xg_1 \cap \Xg_2)}{\mu_{d}(\X_d)}\biggl\vert_{x_d} ~\mu_d\text{-}a.e., \\
    &\int_{\xv_{-d} \in \X_{-d}} b(\xv | \Xg_1 \cup \Xg_2, \xv_{-d})\mu_{-d}(d\xv_{-d}) \\
    &~~~~~=
    \frac{\mu_{\X}(\Xg_1 \cup \Xg_2)}{\mu_{d}(\X_d)}\biggl\vert_{x_d}~\mu_d\text{-}a.e.. \\
  \end{aligned}
  \label{appx:proofs:eq:intersec-union-is-const}
  \end{equation}
  Using these results, the following holds$\mathrm{:}$
  \begin{equation}
  \begin{aligned}
    s(\Xg_1, \Xg_2) &= 
    \frac{\mu_{\X}(\Xg_1 \cap \Xg_2)}{\mu_{\X}(\Xg_1 \cup \Xg_2)} \\
    &= \frac{
      \int_{\xv_{-d} \in \X_{-d}} b(\xv | \Xg_1 \cap \Xg_2, \xv_{-d})\mu_{-d}(d\xv_{-d})
    }{
      \int_{\xv_{-d} \in \X_{-d}} b(\xv | \Xg_1 \cup \Xg_2, \xv_{-d})\mu_{-d}(d\xv_{-d})
    }\\
    &= \frac{
      \int_{\xv \in \X} b(\xv | \Xg_1 \cap \Xg_2)\mu_{\X}(d\xv)
    }{
      \int_{\xv \in \X} b(\xv | \Xg_1 \cup \Xg_2)\mu_{\X}(d\xv)
    }
  \end{aligned}
  \end{equation}
  where the last transformation
  used $\mathrm{Eq.~(\ref{appx:proofs:eq:intersec-union-is-const})}$
  and this completes the proof.
\end{proof}

\begin{figure*}[t]
  \centering
  \includegraphics[width=0.98\textwidth]{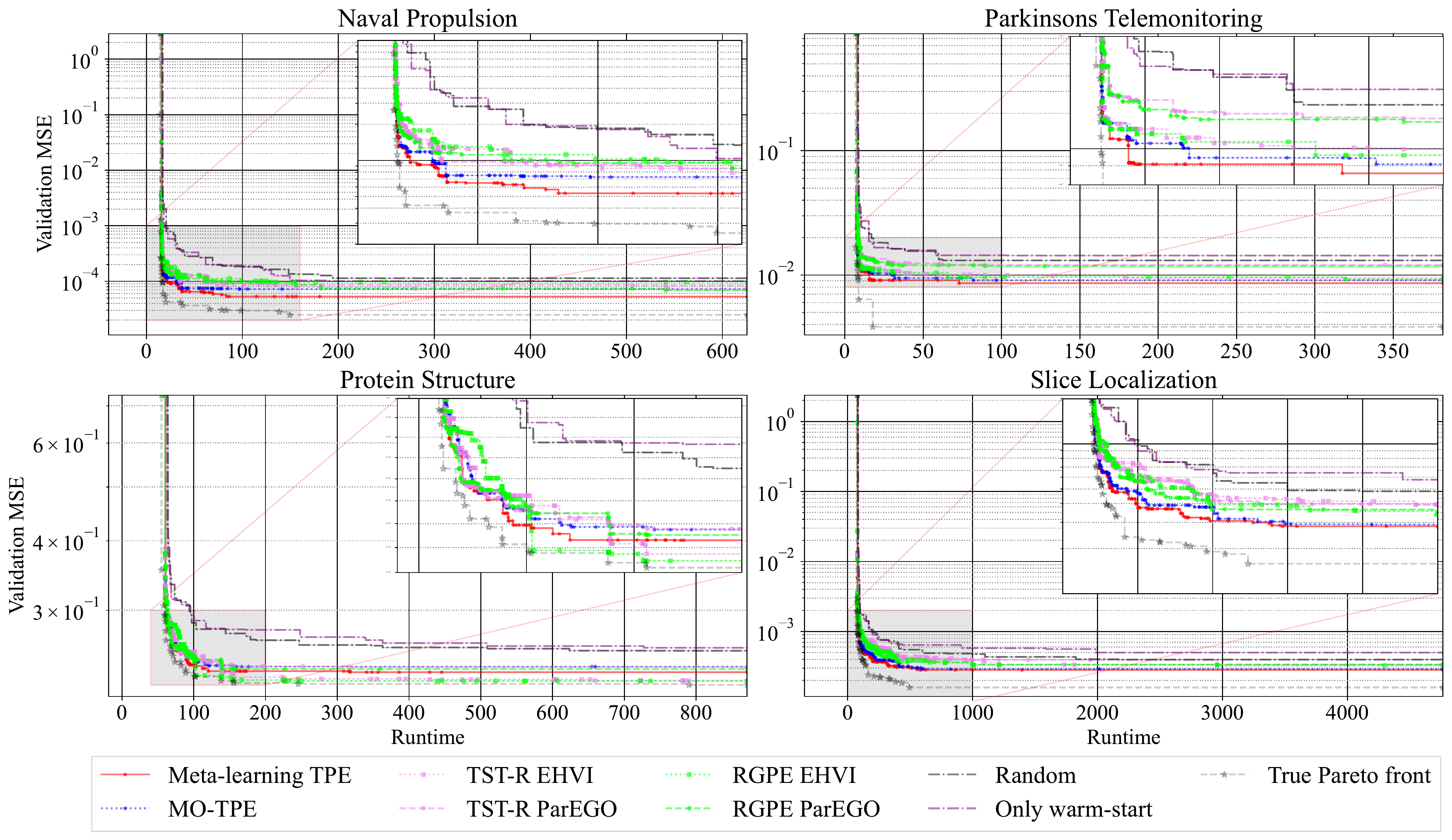}
  \caption{
    The $50\%$ empirical attainment surfaces~\protect\citeappx{watanabe2023pareto}
    of each method
    over $20$ different random seeds
    on HPOlib.
    The objectives are to minimize runtime and validation MSE.
  }
  \label{appx:additional-results:fig:eaf-hpolib}
\end{figure*}

\begin{figure*}[t]
  \centering
  \includegraphics[width=0.98\textwidth]{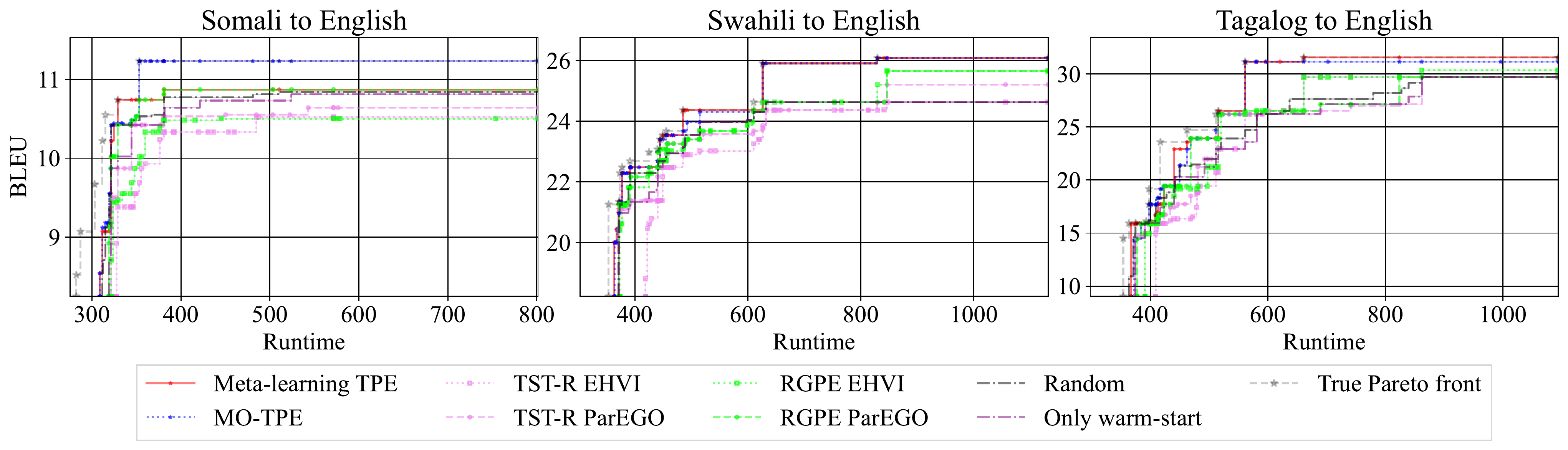}
  \caption{
    The $50\%$ empirical attainment surfaces~\protect\citeappx{watanabe2023pareto} of each method
    over $20$ different random seeds
    on NMT-Bench.
    The objectives are to minimize runtime
    and maximize BLEU.
  }
  \label{appx:additional-results:fig:eaf-nmt}
\end{figure*}

\begin{figure*}[t]
  \centering
  \includegraphics[width=0.98\textwidth]{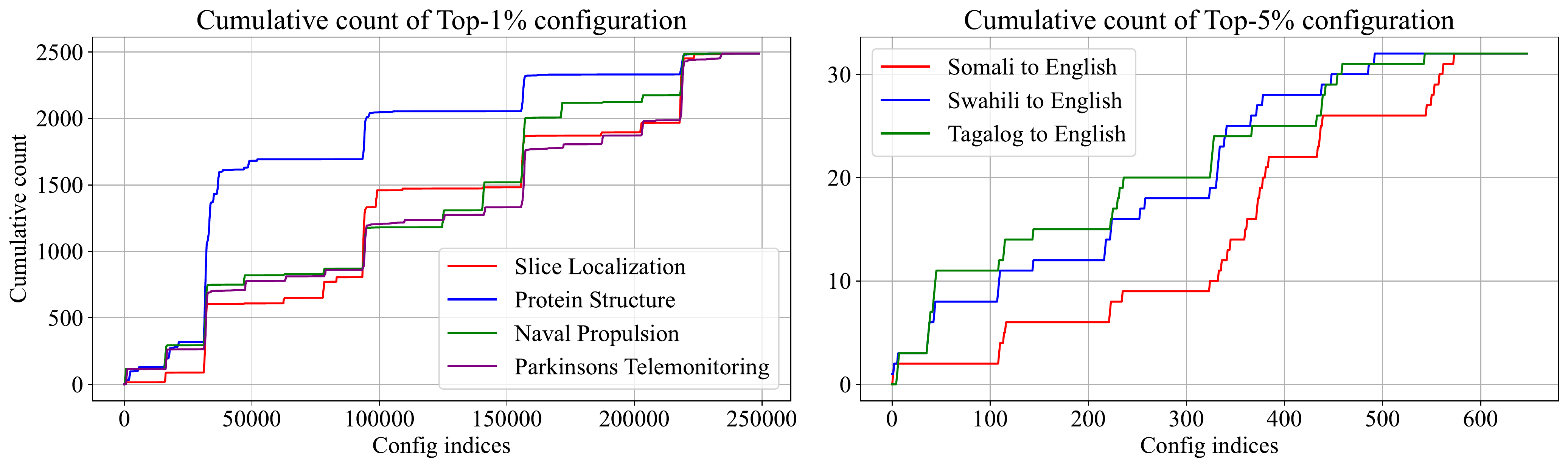}
  \caption{
    The cumulative count of top-$1\%$ on HPOlib (\textbf{Left})
    and top-$5\%$ on NMT-Bench (\textbf{Right}) over configuration indices.
    Although subsequent indices do not necessarily imply
    that those configurations are close in the search space,
    each index identifies the same configuration for each dataset,
    and thus we can see trends by tracking the variations of each curve.
  }
  \label{appx:additional-results:fig:dataset-dist}
\end{figure*}

\section{Details of Experiments}
\label{appx:details-of-experiments:section}
\addtolength{\tabcolsep}{-5pt}
\begin{table}
  \begin{center}
    \caption{
      The search space of HPOlib.
      All tasks have the same search space by default.
      Each benchmark has performance metrics of $62208$
      possible configurations with $4$ random seeds.
    }
    \vspace{-2mm}
    \label{appendix:details-of-experiments:tab:hpolib}
    \begin{tabular}{ll}
      \toprule
      Hyperparameter          & Choices                                         \\
      \midrule
      Number of units 1       & \{$2^4, 2^5, 2^6, 2^7, 2^8, 2^9$\}              \\
      Number of units 2       & \{$2^4, 2^5, 2^6, 2^7, 2^8, 2^9$\}              \\
      Dropout rate 1          & \{$0.0,0.3,0.6$\}                               \\
      Dropout rate 2          & \{$0.0,0.3,0.6$\}                               \\
      Activation function 1   & \{ReLU$,$ tanh\}                                \\
      Activation function 2   & \{ReLU$,$ tanh\}                                \\
      Batch size              & \{$2^3, 2^4, 2^5, 2^6$\}                        \\
      Learning rate scheduler & \{cosine$,$ constant\}                          \\
      Initial learning rate   & \{$5$e-$4,1$e-$3,5$e-$3,1$e-$2,5$e-$2,1$e-$1$\} \\
      \bottomrule
    \end{tabular}
  \end{center}
\end{table}

\begin{table}
  \begin{center}
    \caption{
      The search space of NMT-Bench.
      To make the search spaces for all tasks identical,
      we reduced the choices of ``Number of layers''
      in the ``Swahili to English'' dataset.
      Each benchmark has performance metrics of $648$
      possible configurations with $1$ random seed.
    }
    \vspace{-2mm}
    \label{appendix:details-of-experiments:tab:nmt-bench}
        \begin{tabular}{ll}
          \toprule
          Hyperparameter                           & Choices                            \\
          \midrule
          Number of BPE symbols ($\times 10^3$)    & \{$2^0, 2^1, 2^2, 2^3, 2^4, 2^5$\} \\
          Number of layers                         & \{$1,2,4$\}                        \\
          Embedding size                           & \{$2^8, 2^9, 2^{10}$\}             \\
          Number of hidden units in each layer     & \{$2^{10}, 2^{11}$\}               \\
          Number of heads in self-attention        & \{$2^3, 2^4$\}                     \\
          Initial learning rate ($\times 10^{-4}$) & \{$3, 6, 10$\}                     \\
          \bottomrule
        \end{tabular}
  \end{center}
\end{table}
\addtolength{\tabcolsep}{5pt}

\subsection{Dataset Description}
In the experiments, we used the following tabular benchmarks:
\begin{enumerate}
  \item HPOlib (Slice Localization,
  Naval Propulsion,
  Parkinsons Telemonitoring,
  Protein Structure)~\citeappx{klein2019tabular}, and
  \item NMT-Bench (Somali to English, Swahili to English, Tagalog to English)~\citeappx{zhang2020reproducible}.
\end{enumerate}
The search spaces for each benchmark are presented in
Tables~\ref{appendix:details-of-experiments:tab:hpolib},
\ref{appendix:details-of-experiments:tab:nmt-bench}.

HPOlib is an HPO tabular benchmark for neural networks on regression tasks.
This benchmark has four regression tasks
and provides the number of parameters, runtime,
and training and validation mean squared error (MSE) for each configuration.
In the experiments, we optimized validation MSE and runtime.

NMT-Bench is an HPO tabular benchmark for neural machine translation (NMT)
and we optimize hyperparameters of transformer.
This benchmark provides three tasks:
NMT for Somali to English, Swahili to English, and Tagalog to English.
Since the original search space of ``Swahili to English'' is slightly different from
the others, we reduced its search space to be identical to the others.
In the experiments, we optimized
translation accuracy (BLEU)~\citeappx{papineni2002bleu}
and decoding speed.
Note that as mentioned in Footnote 7 in the original paper~\citeappx{zhang2020reproducible},
some hyperparameters do not have both BLEU and decoding speed
due to training failures;
therefore, we pad the worst possible values for such cases.

\subsection{Details of Optimization Methods}

\subsubsection{Baseline Methods}
In the experiments, we used the following:
\begin{enumerate}
  \item MO-TPE~\citeappx{ozaki2022multiobjective},
  \item Random search~\citeappx{bergstra2012random},
  \item TST-R~\citeappx{wistuba2016two}, and
  \item RGPE~\citeappx{feurer2018practical}.
\end{enumerate}
Note that since TST-R and RGPE are introduced
as meta-learning methods for single-objective optimization problems,
we extended them to MO settings using either ParEGO~\citeappx{knowles2006parego}
or EHVI~\citeappx{emmerich2011hypervolume}.
Since both extensions require only ranking loss between configurations,
we defined the ranking of each configuration using
the non-dominated rank and crowding distance
as in NSGA-II~\citeappx{deb2002fast}.
Although TST-R and RGPE implementations were provided in \url{https://github.com/automl/transfer-hpo-framework},
since the implementations are based on SMAC3~\footnote{
  SMAC3: \url{https://github.com/automl/SMAC3}
}, which does not support EHVI,
we re-implemented RGPE and TST-R using BoTorch~\footnote{
  BoTorch: \url{https://github.com/pytorch/botorch}
}.
Our BoTorch implementations are also available in \url{https://github.com/nabenabe0928/meta-learn-tpe}.
We chose $\gamma = 0.1$ for MO-TPE by following the original paper~\citeappx{ozaki2020multiobjective}
and modified the tie-break method from HSSP to crowding distance
as TST-R and RGPE also use crowding distance for tie-breaking in our experiments.
Furthermore, we employed multivariate kernel to improve the performance of TPE~\citeappx{falkner2018bohb}.

\begin{figure*}[t]
  \centering
  \includegraphics[width=0.98\textwidth]{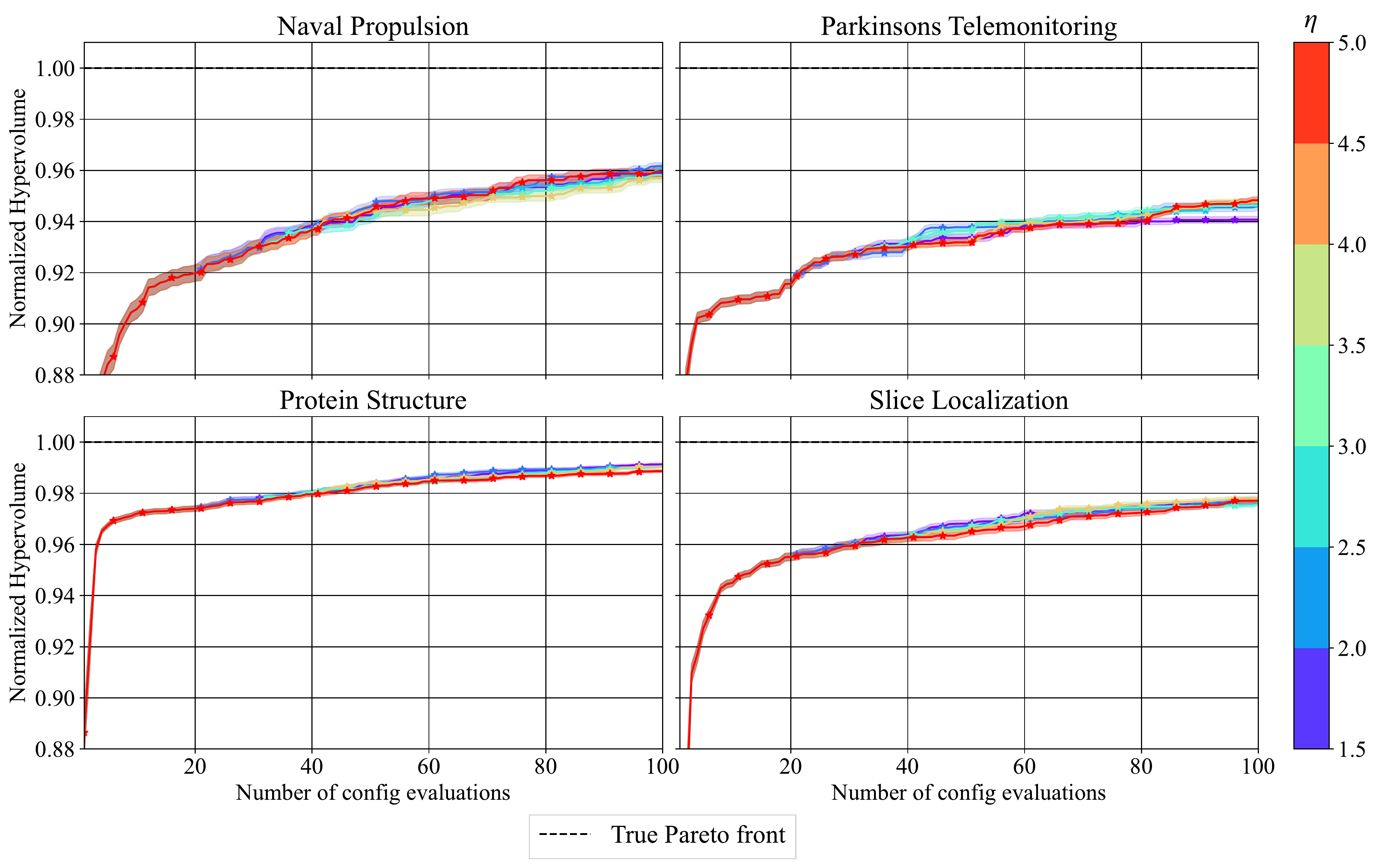}
  \caption{
    The ablation study of $\eta$ in our method by the normalized HV over time on four
    joint neural architecture search and hyperparameter optimization benchmarks (HPOlib)
    from HPOBench.
    Each method was run with $20$ different random seeds
    and the weak-color bands present standard error.
  }
  \label{appx:experiments:fig:hv-hpolib-ablation}
\end{figure*}

\begin{figure*}[t]
  \centering
  \includegraphics[width=0.98\textwidth]{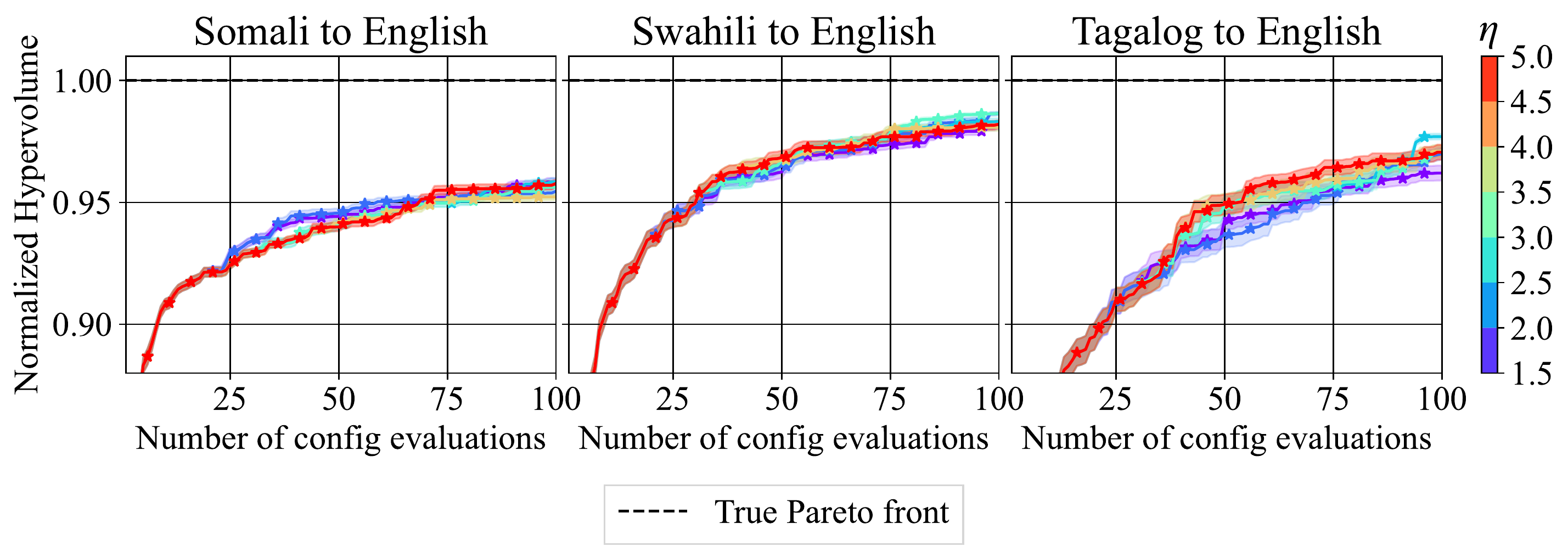}
  \caption{
    The ablation study of $\eta$ in our method by the normalized HV over time on NMT-Bench.
    Each method was run with $20$ different random seeds
    and the weak-color bands present standard error.
  }
  \label{appx:experiments:fig:hv-nmt-ablation}
\end{figure*}

\subsubsection{Settings of Meta-Learning TPE}
As seen in Algorithm~\ref{main:methods:alg:meta-tpe},
meta-learning TPE has several control parameters:
\begin{enumerate}
  \item (TPE) the number of initial samples $N_{\mathrm{init}}$,
  \item (TPE) the number of candidates for each iteration $N_s$,
  \item (TPE) the quantile for the observation split $\gamma$,
  \item (New) dimension reduction factor $\eta$,
  \item (New) sample size of Monte-Carlo sampling for the task similarity approximation $S$, and
  \item (New) the ratio of random sampling in the $\epsilon$-greedy algorithm $\epsilon$.
\end{enumerate}
Note that (TPE) means the parameters already existed from the original version
and (New) means the parameters added in our proposition.
Since the original TPE paper~\citeappx{bergstra2011algorithms,bergstra2013making}
uses $5\%$ of total evaluations for the initialization
and $100$ candidates for each iteration,
we set $N_{\mathrm{init}} = 5$
and $\epsilon = 0.05$ in our experiments
and used $N_s = 100$.
$\gamma = 0.1$ followed the original MO-TPE~\citeappx{ozaki2020multiobjective}.
For the other parameters, we used $S = 1000$ and $\eta = 5/2$.
Note that when $S = 1000$, it does not take a second to compute the similarity even with $N_m = 10^4$
due to the time complexity of $O(S\max_{m \in \{1,\dots,T\}} N_m)$.
The ablation study of $\eta$ is available in the next section.
Furthermore, we employed multivariate kernel as in
prior works~\citeappx{falkner2018bohb,watanabe2023ctpe}
and used crowding distance to tie-break non-dominated rank
as mentioned in the previous section.

\subsubsection{Ablation Study of $\eta$}
Figures~\ref{appx:experiments:fig:hv-hpolib-ablation} and \ref{appx:experiments:fig:hv-nmt-ablation}
show the ablation study of our method.
Intuitively, when $\eta$ is low, more dimensions will be used and the similarity decay will be quicker, and thus low $\eta$ is supposed to be disadvantageous for similar task transfer.
However, our method exhibits almost indistinguishable performance except for \texttt{Parkinsons Telemonitoring} with $\eta=1.5$.
It implies that our method is robust to the choice of the control parameter $\eta$.

\begin{figure*}
  \centering
  \includegraphics[width=0.98\textwidth]{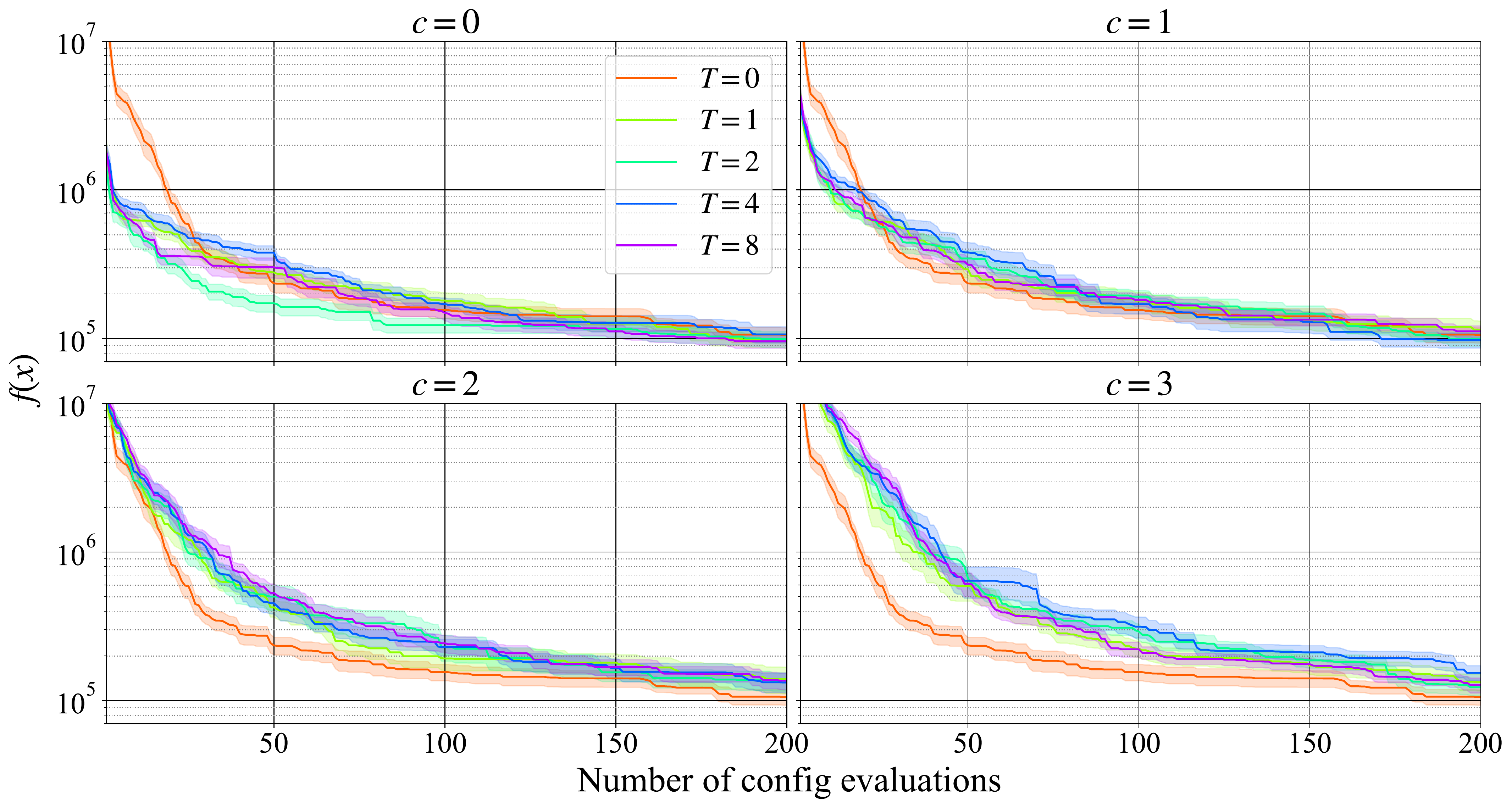}
  \caption{
    The effect of the number of tasks (\textbf{Top left}: $c=0$, \textbf{Top right}: $c=1$, \textbf{Bottom left}: $c=2$, \textbf{Bottom right}: $c=3$. A smaller $c$ implies that the target task and the meta-tasks are more similar.) on the performance of meta-learning TPE.
    $T$ is the number of meta-tasks to use and $T = 0$ is the vanilla TPE.
    Each setting was run with 20 different random seeds and the weak-color bands present standard error.
  }
  \label{appx:experiments:fig:ntasks-analysis}
\end{figure*}

\subsubsection{Effect of Number of Tasks}
To see the effect of the number of tasks, we tested our meta-learning TPE with different numbers of meta-tasks on different similarities of the target function.
In this experiment, we used the $10D$ ellipsoid function $f(\xv | c) =\sum_{d=1}^{10} 5^{d - 1}(x_d - c)^2$ defined on $[-5,5]^{10}$ and used $f(\xv | c = 0)$ as the target task.
Since the contributions that meta-learning can make depend on the meta-tasks to use, we fixed a meta-task and used the same meta-task for all the meta-tasks.
For example, when we use $4$ meta-tasks, we first fixed $c$ to a certain value, e.g. $c=1$, and then we used $c = 1$ for all four tasks.
In this way, we can avoid the similarity difference between different numbers of meta-tasks.
We collected $100$ random observations from each meta-task and used exactly the same setting as in Section~\ref*{main:experiments:section}.
Note that the number of initial configurations $N_{\mathrm{init}} = 10$ is $5\%$ of the maximum number of observations.

Figure~\ref{appx:experiments:fig:ntasks-analysis} shows the results of $c \in \{0,1,2,3\}$.
For all the settings, the performance did not change largely as long as we use the meta-learning.
Although the performance of meta-learning TPE almost recovered that of the vanilla TPE as the number of observations increased, meta-learning TPE was outperformed by the vanilla TPE in the early stage of the optimizations when meta-tasks are dissimilar to the target task.
Note that since we used $D = 10$ instead of $D = 4$, the true task similarity is analytically smaller even with the same $c$ and it explains why our method works poorly for $c = 1, 2$ compared to $D = 4$ provided in Figure~\ref{appx:case-I:fig:similarity-vs-convergence}.

\section{Additional Results of Experiments}
\label{appx:additional-results:section}
Figures~\ref{appx:additional-results:fig:eaf-hpolib}
and \ref{appx:additional-results:fig:eaf-nmt}
show the $50\%$ empirical attainment surfaces~\citeappx{fonseca1996performance,watanabe2023pareto}
for each method on each benchmark.
The $50\%$ empirical attainment surface represents
the Pareto front of the observations
achieved by $50\%$ of independent runs and the implementation is available at \url{https://github.com/nabenabe0928/empirical-attainment-func}.
As discussed in Section~\ref{main:experiments:section},
we mentioned that
our method did not yield the best HV on \texttt{Somali to English}
and \texttt{Protein Structure}.
According to the figures,
validation MSE in \texttt{Protein Structure}
and BLEU in \texttt{Somali to English}
are underexplored by our method.

In Figure~\ref{appx:additional-results:fig:dataset-dist},
we plot the cumulative count
of top configurations by configuration indices
for each dataset
and we can see that both
\texttt{Protein Structure} and
\texttt{Somali to English}
have different distributions, which may imply
the $\gamma$-set for each dataset is not close
to that for the other datasets.
In fact, we could find out 
that only \texttt{Somali to English} requires
a high number of BPE symbols compared to other datasets.
From the results, we could conclude that
while our method is somewhat affected by knowledge transfer
from not similar meta-tasks,
our method, at least, exhibits indistinguishable performance
from that of MO-TPE in our settings.

\bibliographystyleappx{bib-style}
\bibliographyappx{ref}

\else

\customlabel{appx:details-of-meta-learning-tpe:section}{A}
\customlabel{appx:details-of-acq-fn:section}{A.1}
\customlabel{appx:details-of-task-similarity:section}{A.2}
\customlabel{appx:methods:eq:total-variation}{20}
\customlabel{appx:detail-of-acq-fn:fig:identical-target-dist}{6}
\customlabel{main:methods:theorem:convergence-of-gamma-similarity}{2}
\customlabel{appx:proofs:sec}{B}
\customlabel{appx:assumptions:section}{B.1}
\customlabel{appx:preliminaries:section}{B.2}
\customlabel{main:background:def:gamma-set}{2}
\customlabel{appx:generalization-tpe:section}{B.3}
\customlabel{appx:proof-of-similarity-convergence:section}{B.4}
\customlabel{appx:proofs:eq:total-variation}{27}
\customlabel{appx:proofs:lemma:similarity-and-total-variation}{1}
\customlabel{appx:proof:eq:prob-measure-and-operators}{28}
\customlabel{appx:proof:eq:volume-vs-prob-measure}{29}
\customlabel{appx:proofs:eq:true-task-similarity}{30}
\customlabel{appx:proofs:proof:similarity-is-total-variation-ratio}{1}
\customlabel{appx:proofs:eq:convergence-of-total-variation}{31}
\customlabel{appx:proofs:lemma:convergence-of-total-variation}{2}
\customlabel{appendix:proofs:subsection:proof-of-gamma-set-convergence}{B.5}
\customlabel{appendix:proofs:theorem:tpe-with-eps-greedy-converges-to-gamma-set}{3}
\customlabel{appendix:proofs:subsection:proof-of-trivial-dims-not-matter}{B.6}
\customlabel{appendix:details-of-experiments:tab:hpolib}{1}
\customlabel{appx:proofs:eq:intersec-union-is-const}{39}
\customlabel{appx:details-of-experiments:section}{C}
\customlabel{appx:additional-results:fig:eaf-hpolib}{7}
\customlabel{appendix:details-of-experiments:tab:nmt-bench}{2}
\customlabel{appx:additional-results:fig:eaf-nmt}{8}
\customlabel{appx:additional-results:fig:dataset-dist}{9}
\customlabel{appx:experiments:fig:hv-hpolib-ablation}{10}
\customlabel{appx:additional-results:section}{D}
\customlabel{appx:experiments:fig:hv-nmt-ablation}{11}
\customlabel{appx:experiments:fig:ntasks-analysis}{12}

\fi

\end{document}